\newcommand{\ben}{\begin{enumerate}}
\newcommand{\een}{\end{enumerate}}
\newcommand{\be}{\begin{equation}}
\newcommand{\ee}{\end{equation}}
\newcommand{\bea}{\begin{eqnarray}}
\newcommand{\eea}{\end{eqnarray}}
\newcommand{\bsea}{\begin{subeqnarray}}
\newcommand{\esea}{\end{subeqnarray}}
\newcommand{\bit}{\begin{itemize}}
\newcommand{\eit}{\end{itemize}}
\newtheorem{lemma}{\bf Lemma}[section]
\newtheorem{theorem}{\bf Theorem}[section]
\newenvironment{proof}{\noindent \underline{\bf Proof:} }{\hfill \rule{2mm}{2mm}}
\newcommand{\bpro}{\begin{proof}}
\newcommand{\epro}{\end{proof}}
\newcommand{\norm}[1]{\left\lVert#1\right\rVert}
\newcommand{\inner}[2]{\langle #1, #2 \rangle}
\DeclareMathOperator*{\argmax}{arg\,max}
\DeclareMathOperator*{\argmin}{arg\,min}
\title{Risk Bounds for Learning via Hilbert Coresets}
\author{
    \IEEEauthorblockN{Spencer Douglas\IEEEauthorrefmark{1}, Piyush Kumar\IEEEauthorrefmark{1}, R.K. Prasanth\IEEEauthorrefmark{1}}\\
    \IEEEauthorblockA{\IEEEauthorrefmark{1}Systems \& Technology Research \\ 600 West Cummings Park, Suite 6500, Woburn, MA 01801, USA}
 
}
\begin{document}
\maketitle

\begin{abstract}
We develop a formalism for constructing stochastic upper bounds on the expected full sample risk for supervised classification tasks via the Hilbert coresets approach within a transductive framework. We explicitly compute tight and meaningful bounds for complex datasets and complex hypothesis classes such as state-of-the-art deep neural network architectures. The bounds we develop exhibit nice properties: i) the bounds are non-uniform in the hypothesis space $\mathcal{H}$, ii) in many practical examples, the bounds become effectively deterministic by appropriate choice of prior and training data-dependent posterior distributions on the hypothesis space, and iii) the bounds become significantly better with increase in the size of the training set. 
We also lay out some ideas to explore for future research. 
\end{abstract}

\section{Introduction}\label{intro}

Generalization bounds for learning provide a theoretical guarantee on the performance of a learning algorithm on unseen data. The goal of such bounds is to provide 
control of the error on unseen data with pre-specified confidence. In certain situations, such bounds may also help in designing new learning algorithms. With the great success of deep neural networks (DNNs)
on a variety of machine learning tasks, it is natural to try to construct theoretical bounds for such models. Recent years have seen several notable efforts to construct these bounds. 

One classical approach, that of Probably-Approximately-Correct (PAC) learning, aims to develop generalization bounds that hold (with high probability) {\it uniformly} over all hypotheses $h$ (parametrized with parameters $\theta$) in some hypothesis class $\mathcal{H}$ and over all data distributions $D$ in some class of distributions $\mathcal{D}$. The well-known VC-dimension bounds \cite{Vapnik} fall in this category. Since these are uniform bounds, they are \emph{worst case} and very general. However, the flip side is that they tend to be very loose and not useful in practical situations. For example, the VC dimension of neural networks is typically bounded by the number of parameters of the network. Since modern DNNs have millions of parameters, these uniform VC dimension bounds are typically not useful at all. 

Better prospects for developing useful and non-trivial bounds arise when one develops \emph{data-dependent} and \emph{non-uniform} bounds. For example, there has been lot of recent work on bounds based on \emph{Rademacher complexity} \cite{Neyshabur2015, Bartlett2017}. These bounds are dependent on the particular data distribution $D$ ($\in \mathcal{D}$) but still depend uniformly on the entire hypothesis class $\mathcal{H}$. Recent work on computing these bounds for modern DNNs still gives rise to bounds that may not be useful for at least some classes of modern DNNs \cite{Dziugaite2017}. 

Another data-dependent approach to developing useful generalization bounds is that of the \emph{PAC-Bayesian} framework \cite{McAllester1999, Langford2002, Seeger2003, Catoni2007}. In this case, one starts with some 'prior' distribution $\pi_0$ (that is independent of the data) over the hypothesis space $\mathcal{H}$, and then constructs a 'posterior' distribution $\nu$ over $\mathcal{H}$ after training the model on some training dataset. Note that the 'prior' and 'posterior' do not refer to the corresponding terms in Bayesian probability in the strict sense.  Here, the posterior distribution $\nu$ can be thought of as being determined by the training data and the training algorithm in general. The PAC-Bayes framework studies an averaging of the generalization error according to the posterior $\nu$ (the so-called Gibbs classifier). The bounds thus obtained are valid for all posterior distributions $\nu$ over the hypothesis space $\mathcal{H}$, but the bound itself depends on the posterior $\nu$. Thus, these bounds are \emph{non-uniform} in the hypothesis space as well. In other words, the precise numerical bounds are valid for parameters in the neighborhood of those learned by the \underline{actual} learning algorithm. Thus, these have the potential to provide tight and useful bounds for complex models like DNNs. 

A point to note is that the bulk of the work in statistical learning theory within each of the above approaches has dealt with providing bounds for the \emph{inductive} learning setting. In this case, the goal is to learn a labeling rule from a finite set of labeled training examples, and this rule is then used to label new (test) examples.  In many realistic situations, however, one has a \emph{full sample}, which consists of a training set of labeled examples, together with an unlabeled set of points which needs to be labeled.This is the \emph{transductive} setting, where one is interested only in labeling the unlabeled set as accurately as possible. In this case, it is plausible that if one leverages the information of the test data points appropriately, one may be able to derive tighter risk bounds. Transductive learning bounds are valid for the particular dataset at hand. If the dataset is augmented with new data points, the bounds have to be computed again in general. In this work, we show that it is possible to get tight bounds on the full sample risk by using the method of Hilbert coresets, a coreset defined in terms of a Hilbert norm on the space of loss functions (a more precise description will follow later). This work builds on the work of Campbell and Broderick \cite{Campbell2017, Campbell2018} and applies it to the transductive learning framework. Our principal contributions in this paper are as follows:

\bit
\item We compute non-vacuous and tight upper bounds on the \emph{expected full sample risk} for multi-class classification tasks for a variety of state-of-the-art hypothesis classes (DNN architectures) and datasets, and compare these results with that obtained from explicit training algorithms. Our bounds are valid for any data distribution that lies in a class of distributions $\mathcal{D}$. In particular, we study the class of distributions $\mathcal{D}$: \emph{Convex hull of a probability simplex} $\mathcal{D}_{CH}^{(K)}$. The calculation of the bounds for the expected full sample risk requires knowledge of the full dataset along with the labels. 

\vspace{0.3cm}

\item The developed bounds have nice properties. First, these bounds are \emph{non-uniform} in the sense that although the bounds are valid for all distributions over $\mathcal{H}$, the precise bound depends on particular distribution over $\mathcal{H}$, i.e. the precise neighborhood of the parameters $\theta_{\star}$ learned by the training algorithm. 

\vspace{0.3cm}

\item Second, although the bounds we develop technically apply to a \emph{stochastic} DNN, appropriate choice of data-independent priors and data-dependent posteriors in many practical examples gives rise to Gibbs classifiers that are highly clustered around the classifier learned by the original deterministic DNN (with parameters $\theta_{\star}$ learned by the training algorithm, see results in Section \ref{results}). Thus, although there is no guarantee, the bounds can become effectively \emph{deterministic} in many practical cases.

\vspace{0.3cm}

\item Finally, the numerical bounds \emph{decrease} with an increase in the
number of training samples. This behavior has been tested for a variety of DNNs and holds up across changes in depth and width of the DNN as well as 
different mini-batch size used for training. This is in contrast with certain existing bounds which may even \emph{increase} with an increase in the number of
training samples \cite{Bartlett2017, Golowich2017, Neyshabur2018}; see \cite{Nagarajan2019} for a discussion.
\eit
Transductive bounds have received much less attention than inductive bounds in general. Generalization bounds for the transductive setting have been developed within the PAC-Bayesian framework in \cite{derbeko, begin14}. Our bounds are not PAC-Bayes bounds in the strict sense. Our methods do provide a bound on the expected full sample risk as in the PAC-Bayes case, however this bound is computed using Hilbert coresets which is a novel approach. Also, the bounds developed in \cite{derbeko, begin14} are applicable for a uniform distribution on the full sample and have only been applied to simple hypothesis classes. On the other hand, the bounds developed in this paper are applicable to more general data distributions, such as those that lie in the convex hull of the probability simplex of $K$ probability vectors. Finally, to our knowledge our paper is the first one to develop meaningful and tight upper bounds within a transductive framework for state-of-the-art DNNs. 

\section{Problem Formulation}\label{formulation}

We start with considering an arbitrary input space $\mathcal{X}$ and binary output space $\mathcal{Y} = \{-1,+1\}$. For simplicity, we focus on the binary classification case; however our results are easily generalizable to the multi-class classification case as well. Consistent with the original transductive setting of Vapnik \cite{Vapnik1998}, we consider a \emph{full sample} $Z$ of $N$ examples (input-output pairs) that is assumed to be drawn i.i.d from an unknown distribution $D$ on $\mathcal{X}\times \mathcal{Y}$: 
\be Z:= \{(x_n,y_n)\}_{n=1}^N \ee For simplicity we will refer to the probability distribution as $D \equiv \{p_n\} = \{P(x_n, y_n)\}; n=1,2,..,N$. 
$D$ is assumed to lie in a class of data distributions $\mathcal{D}$. As mentioned previously, we consider the class of distributions where $\mathcal{D}$ is the \emph{convex hull of a probability simplex} $\mathcal{D}_{CH}^{(K)}(\{\hat{p}^{(i)}\})_{i \in [K]}$ defined by the $K$ probability vectors $(\{\hat{p}^{(i)}\}); i \in [K]$. A training dataset $Z^{(S)} =\{(x_i, y_i)_{i=1}^{N_S}\} \in \mathcal{X}\times\mathcal{Y}$, that contains $N_S$ input-output pairs is obtained from the full sample by sampling without replacement. Given $Z^{(S)}$ and the unlabeled set $U_S$  consisting of $N-N_S$ inputs, the goal in transductive learning is to learn a classifier that correctly classifies the unlabeled inputs of the set $U_S$, i.e. minimize the risk of the classifier on $U_S$. 

We consider in this work general loss functions (including unbounded loss functions) $l: \mathcal{Y} \times \mathcal{Y} \rightarrow \mathbf{R}$, and a (discrete or continuous) set $\mathcal{H}$ of hypotheses $h: \Theta \times \mathcal{X} \rightarrow \mathcal{Y}$ where $\theta$ lies in some parameter space $\Theta$. Although the bounds we develop apply to general loss functions, most of the relevant applications of our bounds for classification apply to bounded, and in particular \emph{zero-one}, loss functions. We provide a bound on the \emph{full sample risk} of a hypothesis $h \in \mathcal{H}$, which is given by:
\be\label{sample-risk}
R(h; \theta) = \mathbf{E}_{Z \sim D}\;l(h(\theta, x), y)
\ee
Inspired by the PAC-Bayesian approach, we study an averaging of the full sample risk according to a posterior distribution $\nu$ over $\mathcal{H}$ that is dependent on the data and the training algorithm in general. 
Since $\mathcal{H}$ is parametrized by $\theta \in \Theta$, by a slight abuse of notation we denote the distribution on the parameters $\theta$ by $\nu(\theta)$ as well. So, we aim to provide bounds on the \emph{expected full sample risk} $R(\nu)$:
\be \label{expected-sample-risk}
R(\nu) = \mathbf{E}_{\theta \sim \nu}\,R(h; \theta) = \mathbf{E}_{\theta \sim \nu}\; \mathbf{E}_{Z \sim D} \;l(h(\theta, x), y)   = \mathbf{E}_{\theta \sim \nu} \left(\sum_{n=1}^N \, p_n\, l_n(\theta) \right) : =  \mathbf{E}_{\theta \sim \nu} \, L(\theta)
\ee
where the individual loss for a data point is defined as $l_n(\theta):= l(h(\theta, x_n),y_n)$ and $L(\theta):= \sum_{n=1}^N\,p_n\,l_n(\theta)$ is the total loss function of the full sample. Both $L(\theta)$ and $l_n(\theta), n\in [N]$ are functions of the parameters $\theta$. We aim to develop upper bounds on $R(\nu)$ using the Hilbert coresets approach, which we summarize next.

\subsection{Summary of Hilbert Coresets Approach}\label{coresets-summary}

As described above, we consider a dataset $Z  = \{(x_n,y_n)\}_{n=1}^N \sim D$, parameters $\theta \in \Theta$, and a \emph{cost function} $L: \Theta \rightarrow \mathbf{R}$ that is additively decomposable into a set of functions $L_n(\theta)$, i.e. $L(\theta):= \sum_{n=1}^N\, L_n(\theta)$. By comparing to (\ref{expected-sample-risk}) and the discussion following that, we can make the identification: 
\be
L_n(\theta) := p_n\,l_n (\theta) 
\ee

Now, an \emph{$\epsilon$-coreset} is given by a weighted dataset with nonnegative weights $w_n, n= 1,2...,N$, only a small number $m$ of which are non-zero, such that the weighted cost function $L(w, \theta) := \sum_{n=1}^N\,w_n\,L_n(\theta) $ satisfies: \be \label{coresets-init} |L(w, \theta) - L(\theta)| \leq \epsilon\, L(\theta)\hspace{0.5cm} \forall \theta \in \Theta \ee for $\epsilon > 0$.
If we consider a bounded uniform norm for functions of the form $f: \Theta \rightarrow \mathbf{R}$, which is weighted by the total cost $L(\theta)$ as in
\be \label{uni-norm} || f || := \sup_{\theta \in \Theta} \; \left| \frac{f(\theta)}{L(\theta)} \right|, \ee we can rewrite the problem of finding the best coreset of size $m$ as:
\be \label{coresets-2}
\min_{w}\;|| L(w)-L ||^2 \hspace{0.3cm} \mathrm{s.t.} \sum_{n=1}^N \, \mathbf{I}\,[w_n > 0] \leq m;  \;\; w \geq 0 
\ee
The norm $||f||$ of any function $f: \Theta \rightarrow \mathbf{R}$ gets rid of its dependence on $\theta$, thus there is no dependence on $\theta$ in the norm in (\ref{coresets-2}). The key insight of Campbell \emph{et al} \cite{Campbell2017, Campbell2018} was to approach the problem from a geometrical point of view. In particular, the idea is to construct coresets for cost functions in a \emph{Hilbert space}, i.e. in which the cost functions $L_n(\theta)$ (defined in the beginning of this subsection) are endowed with a norm corresponding to an inner product. The inner product provides a notion of directionality to the various $L_n$ and allows us to use more general norms than the uniform norm in (\ref{uni-norm}) (which we will see in section \ref{coresets-fw}). This in turn provides us with two distinct benefits: \bit  \item{It allows one to choose coreset points intelligently based on the residual cost approximation error}. \item{Theoretical guarantees on approximation quality incorporate the error in the approximation via the (mis)alignment of $L_n$ relative to the total cost $L$.} \eit 

\subsection{Hilbert Coresets via Frank-Wolfe}\label{coresets-fw}

Campbell {\it et al} \cite{Campbell2017, Campbell2018} developed various algorithms for computing Hilbert coresets; here we focus on the Frank-Wolfe algorithm since it produces exponential convergence for moderate values of the coreset size $m$. Furthermore, it can be elegantly tied to risk bounds and computed explicitly in many interesting cases. In this case, the problem in (\ref{coresets-2}) is reformulated by replacing the cardinality constraint on the weights with a polytope constraint on $w$:
\bea \label{FW-init}
& & \min_{w}\;|| L(w)-L ||^2 \\
& s.t.& \sum_{n=1}^N w_n ||L_n|| = \sum_{n=1}^N ||L_n||;  \;\; w \geq 0 \nonumber
\eea One can rewrite the objective function $|| L(w)-L ||^2$ as $J(w) := (w-1)^T\,K\,(w-1)$ with the matrix $\hat{K}$ having elements defined in terms of the inner product, $\hat{K}_{n,m} := \inner {L_n} {L_m}$ (which will be defined shortly). This optimization problem is then solved by applying the Frank-Wolfe algorithm for $m$ iterations to find the coresets $w$\cite{Campbell2017}. The size of the coreset (number of non-zero coreset weights) is smaller than $m$ (typically much smaller), see details in Algorithm 2 of \cite{Campbell2017}. Finally, it can be shown that the Frank-Wolfe algorithm satisfies:
\be \label{fw-bound0}
|| L(w) - L || \leq \frac{\sigma\, \eta\, \bar{\eta}\, \beta }{\sqrt{\eta^2\,(m-1)+\bar{\eta}^2\, \beta^{-2(m-2)}}}, 
\ee where $\sigma := \sum_{n=1}^N\,||L_n||$. The remaining quantities $\{\eta,\,\bar{\eta}, \beta\}$ can be computed from all the data ($\{x_n,y_n\}; n = 1,2..., N$) and the $\nu$-weighted $L2$-norm, see \cite{Campbell2017}. Variants of these quantities (which will be relevant for us later) will be defined precisely in section \ref{coresets-bounds-fw}. In particular $0 < \beta < 1$, and for moderately large $m$, the second term in the denominator dominates giving rise to an exponential dependence on $m$ ($ \sim \beta^m$) .

The above bound on the coreset approximation error for the cost $L$ is valid for any choice of norm. However, for our purposes we will be concerned with the $\nu$-weighted $L2$-norm defined by: 
\be \label{l2-norm} ||L(w)-L||^2_{\nu, 2} := \mathbf{E}_{\theta \sim \nu}\,[L(w,\theta)-L(\theta)]^2, \ee with an induced inner product:
\be \label{innerprod} \langle L_n, L_m \rangle := \mathbf{E}_{\theta \sim \nu} [L_n(\theta)\,L_m(\theta)] = K_{n,m}, \ee
where $K_{n,m}$ was defined below (\ref{FW-init}). In the following, we will see that an upper bound on a variant of $||L-L(w)||$ can be connected to an upper bound on the full sample risk.

\section{Risk Bounds via Frank-Wolfe Hilbert Coresets}\label{coresets-bounds-fw}

Leveraging the Frank-Wolfe Hilbert coreset construction scheme,  we provide an upper bound on the expected full sample risk which is valid for all posteriors $\nu \in \mathcal{H}$. However, the precise bound is dependent on the posterior $\nu$. Furthermore, the bound is independent of the data-distribution over the set of data-distributions $\mathcal{D}$. The result can be stated as follows:

\begin{theorem} \label{theorem1}
Given a full sample $Z$ of size $N$, an unknown distribution on the data belonging to $\mathcal{D}^{(K)}_{CH}(\{\hat{p}^{(i)}\})_{i \in [K]}$, a hypothesis space $\mathcal{H}: \Theta \times \mathcal{X} \rightarrow \mathcal{Y}$, a loss function $l: \mathcal{Y} \times \mathcal{Y} \rightarrow \mathbf{R}$, for any prior distribution $\pi_0$ on $\mathcal{H}$, then the output $\{\tilde{w}, p\}$ obtained after $m$ iterations of applying \emph{Algorithm 1} satisfies:
\bea 
\forall \nu \,\mathrm{on} \,\mathcal{H},\, \forall D \in \mathcal{D}_{CH^{(K)}}, \hspace{0.5cm} |R(\nu)| &\leq&  ||L(\tilde{w})||_{\nu, 2} +  ||L(\tilde{w}) - L||_{\nu, 2}  \label{theorem1-ineqA} \\ 
&\leq&  ||L(\tilde{w})||_{\nu, 2}  + \frac{\hat{\sigma}\,\hat{\eta} \,\hat{\bar{\eta}} \,\hat{\beta}} {\sqrt{\hat{\bar{\eta}}^2\, \hat{\beta}^{-2(m-1)} + \hat{\eta}^2 \,(m-1)}},  \label{theorem1-ineqB} 
\eea
\end{theorem}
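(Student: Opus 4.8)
The plan is to establish the two inequalities separately. For the first inequality (\ref{theorem1-ineqA}), I would start from the defining identity $R(\nu) = \mathbf{E}_{\theta \sim \nu}\,L(\theta)$ and apply Jensen's inequality to the convex map $x \mapsto |x|$ to obtain $|R(\nu)| \leq \mathbf{E}_{\theta \sim \nu}\,|L(\theta)|$, followed by Cauchy--Schwarz (equivalently, Jensen applied to $x \mapsto x^2$) to conclude $|R(\nu)| \leq \left(\mathbf{E}_{\theta \sim \nu}\,L(\theta)^2\right)^{1/2} = ||L||_{\nu,2}$. I would then write $L = L(\tilde{w}) - \bigl(L(\tilde{w}) - L\bigr)$ and invoke the triangle inequality for the $\nu$-weighted $L2$-norm (\ref{l2-norm}) to get $||L||_{\nu,2} \leq ||L(\tilde{w})||_{\nu,2} + ||L(\tilde{w}) - L||_{\nu,2}$. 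Chaining these two steps gives (\ref{theorem1-ineqA}) directly.

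For the second inequality (\ref{theorem1-ineqB}), the term $||L(\tilde{w})||_{\nu,2}$ is carried over verbatim, so it remains only to control the residual coreset error $||L(\tilde{w}) - L||_{\nu,2}$ by the explicit Frank--Wolfe rate. This is the content of the convergence guarantee (\ref{fw-bound0}), specialized to the induced inner product (\ref{innerprod}) and to the weight vector $\tilde{w}$ returned by \emph{Algorithm 1} after $m$ iterations, with $\{\hat{\sigma},\hat{\eta},\hat{\bar{\eta}},\hat{\beta}\}$ the transductive analogues of $\{\sigma,\eta,\bar{\eta},\beta\}$ (to be defined precisely in this section; the shift in the exponent from $-2(m-2)$ to $-2(m-1)$ is a bookkeeping artifact of the iteration indexing used here).

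The main obstacle, and the step demanding genuine care, is establishing that the bound holds \emph{uniformly} over the entire class $\mathcal{D}^{(K)}_{CH}(\{\hat{p}^{(i)}\})$. Since $L_n(\theta) = p_n\,l_n(\theta)$ depends on the unknown distribution $D$ through the weights $p_n$, every object entering (\ref{fw-bound0}) --- the Gram matrix $K_{n,m} = \inner{L_n}{L_m}$, the norms $||L_n||$, and hence $\sigma,\eta,\bar{\eta},\beta$ --- is a priori $D$-dependent. I would exploit the convex-hull structure by writing $p_n = \sum_{i=1}^K \lambda_i\,\hat{p}^{(i)}_n$ for a convex combination $\lambda \geq 0$, $\sum_i \lambda_i = 1$, which factors the Gram entries as $K_{n,m} = p_n p_m\,\mathbf{E}_{\theta \sim \nu}[l_n l_m]$ with the residual factor independent of $D$. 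The strategy is to define each hatted quantity as the appropriate vertex-wise worst case over $i \in [K]$: the key technical fact to verify is that the relevant functionals are either linear (e.g.\ $\sigma$) or convex in $p$, so that their suprema over $\mathcal{D}^{(K)}_{CH}$ are attained at the $K$ vertices $\hat{p}^{(i)}$. Controlling them at the vertices then dominates the Frank--Wolfe rate for every $D$ in the convex hull simultaneously, rendering the final bound independent of $D$ while the leading term $||L(\tilde{w})||_{\nu,2}$ retains its dependence on the posterior $\nu$.
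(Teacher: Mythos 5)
Your argument for the first inequality is correct and essentially identical to the paper's: the paper applies the scalar triangle inequality to $\mathbf{E}_{\theta\sim\nu}L = \mathbf{E}_{\theta\sim\nu}L(\tilde{w}) - \mathbf{E}_{\theta\sim\nu}(L(\tilde{w})-L)$ and then Jensen to each term, whereas you apply Jensen once and then Minkowski in $L^2(\nu)$; these are interchangeable. The problem lies in your treatment of the second inequality.

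The gap is in how you propose to obtain uniformity over $D \in \mathcal{D}_{CH}^{(K)}$. You plan to run the Frank--Wolfe guarantee ``at $p$'' and then argue that the constants $\sigma,\eta,\bar{\eta},\beta$, viewed as functionals of $p$, are linear or convex and hence maximized at the vertices. But the algorithm never runs Frank--Wolfe at the unknown $p$: it runs $K$ separate passes, one per vertex $\hat{p}^{(i)}$, each with its own target $L^{(i)}=\sum_n \hat{p}^{(i)}_n l_n$ and its own polytope constraint $\sum_n \tilde{w}_n\|l_n\| = \sum_n \hat{p}^{(i)}_n\|l_n\|$, and returns a single $\tilde{w}=\tilde{w}^{(I)}_{m-1}$ built to approximate $L^{(I)}$. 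The quantity you must bound is $\tilde{J}(\tilde{w},p)=\|L(\tilde{w})-L\|_{\nu,2}^2$ for this fixed output $\tilde{w}$ and an arbitrary $p$ in the hull, and no amount of convexity of the \emph{constants} in the rate addresses the mismatch between the target the weights were optimized for and the target you are measuring against. The paper's resolution is a separate lemma (Lemma A.1 in the appendix): for fixed $\tilde{w}$, the map $p\mapsto\tilde{J}(\tilde{w},p)$ is a convex quadratic, so its maximum over $\mathcal{D}_{CH}^{(K)}$ is attained at one of the $K$ vertices; this is what reduces the $\max_p\min_{\tilde{w}}$ problem to the vertex enumeration that Algorithm 1 actually performs, with the final $\argmax_i\tilde{J}(\tilde{w}^{(i)}_{m-1},\hat{p}^{(i)})$ selecting the binding vertex. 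Your proposal is missing this reduction, which is the one step that makes the vertex-wise definitions of $\hat{\sigma},\hat{\eta},\hat{\bar{\eta}},\hat{\beta}$ relevant at all.

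A secondary but nontrivial omission: you cite (\ref{fw-bound0}) as a black box to be ``specialized,'' but that guarantee is stated for the standard coreset polytope with target $\sum_n L_n$ and unit weights. The setting of Theorem \ref{theorem1} changes both the target (to $L^{(i)}$) and the feasible polytope (to $\sum_n\tilde{w}_n\|l_n\|=\sum_n\hat{p}^{(i)}_n\|l_n\|$), and the paper re-derives the full convergence argument in this setting: the initialization bound $\tilde{J}(\tilde{w}_0,\hat{p}^{(I)})\leq(\sigma^{(I)})^2(\eta^{(I)})^2$, the per-iteration contraction via the relative-interior radius $r^{(I)}$ of $L^{(I)}$ in the convex hull of the vertices $\frac{\sigma^{(I)}}{\|l_n\|}l_n$, and the recursive sequence lemma. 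These steps are where the specific constants in (\ref{theorem1-ineqB}) actually come from, and ``the exponent shift is a bookkeeping artifact'' understates what needs to be checked.
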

where $L(\theta)$ is defined as in (\ref{expected-sample-risk}), $L (\tilde{w}, \theta ) := \sum_{n=1}^N\, \tilde{w}_n\,l_n(\theta); \tilde{w}_n := p_n\,w_n$, and the quantities $\{ \hat{\sigma}, \hat{\eta},\,\hat{\bar{\eta}}, \hat{\beta}\}$ are defined as follows: 
\bea \label{consts}
\hat{\sigma} &:=& \max_{i \in [K]}\,\sigma^{(i)};\;\hat{\eta} := \max_{i \in [K]}\, \eta^{(i)};\;\hat{\beta} := \max_{i \in [K]}\,\beta^{(i)}\nonumber \\
\sigma^{(i)} &:=& \sum_{n=1}^N\,\hat{p}^{(i)}_n\,|| l_n ||; \;
{\eta^{(i)}}^2 = 1 - \frac{||L^{(i)}||^2} {(\sigma^{(i)})^2}; \; {\beta^{(i)}}^2 = 1 - \frac{(r^{(i)})^2}{(\sigma^{(i)})^2 \hat{\bar{\eta}}^2}; \;\; i \in [K]\nonumber \\
\hat{\bar{\eta}}^2 &:=& \max_{n,m} \norm{ \frac{l_n}{|| l_n||} - \frac {l_m} {|| l_m ||} };\;L^{(i)}(\theta):= \sum_{n=1}^N \hat{p}^{(i)}_n l_n(\theta), 
\eea
with $r^{(i)}$ corresponding to the shortest distance from the probability vector $\hat{p}^{(i)}$ ($i^{th}$ vertex of the convex hull $\mathcal{D}_{CH}^{(K)}$) to the relative boundary of the feasible region of $\tilde{w}$. All of the quantities above as well as the quantities needed in \emph{Algorithm 1} can be computed from the full sample ($\{x_n,y_n\}; n = 1,2..., N$), the $\nu$-weighted $L2$-norm (\ref{l2-norm}), and inner product (\ref{innerprod}).

\begin{table}
\centering
\scalebox{1.3}{%
\begin{tabular}{|l|}
\hline
\textbf{Algorithm 1 (FW)}\\
\hline
\bf{Given}: $\{l_n (\theta)\}_{n=1}^N,\, m, \langle . \rangle, \, \mathcal{D}_{CH}^{(K)}(\{\hat{p}^{(i)}\})_{i \in [K]}$ \\
\textbf{for} $i$ in $\{ 1, 2, ..., K \}$ \textbf{do} \\
$\quad p_0 \leftarrow \hat{p}^{(i)} \qquad \qquad \qquad \qquad \qquad \qquad$  \textcolor{blue}{ $\triangleleft$ initialize $p$ to $i^{th}$ vertex in $\mathcal{D}_{CH}^{(K)}(\{\hat{p}^{(i)}\})_{i \in [K]}$}\\
$\quad \forall n \in [N]$, compute $||l_n||$ and $\sigma^{(i)}$ from (\ref{consts}) \quad \textcolor{blue}{$\triangleleft$ compute norms} \\
$\quad f_0 \leftarrow \argmax_{n \in [N]} \langle L^{(i)}, \frac{l_n}{||l_n||} \rangle$  \qquad \qquad \qquad \textcolor{blue}{$\triangleleft$ select vertex in $\tilde{w}$-polytope greedily} \\
$\quad \tilde{w}_0^{(i)} \leftarrow \frac{\sigma^{(i)}} {|| l_{f_0}||} \mathbf{1}_{f_0}$ \qquad \qquad \qquad \qquad \qquad \textcolor{blue}{$\triangleleft$ initialize $\tilde{w}$ (for $p = \hat{p}^{(i)}$) with full weight on $f_0$}\\
\quad \textbf{for} $t$ in $\{ 1, 2, ..., m-1 \}$ \textbf{do} \qquad \qquad \qquad\textcolor{blue}{$\triangleleft$ carry out the following for $m-1$ iterations for each $\hat{p}^{(i)}$}\\
$\quad \quad f_t \leftarrow \argmax_{n \in [N]} \inner  {L^{(i)} - L(\tilde{w}^{(i)}_{t-1})}{\frac{l_n}{||l_n ||}}$  \qquad \textcolor{blue}{$\triangleleft$ find FW index at $t^{th}$ iteration}\\
$\quad \quad \gamma_t \leftarrow \frac {\inner {\frac{\sigma^{(i)}} {|| l_{f_t}||} l_{f_t} - L(\tilde{w}^{(i)}_{t-1})} {L^{(i)} - L(\tilde{w}^{(i)}_{t-1})}} {\norm{\frac{\sigma^{(i)}} {|| l_{f_t}||} l_{f_t} - L(\tilde{w}^{(i)}_{t-1})}^2}$  \qquad \qquad \qquad\textcolor{blue}{$\triangleleft$ closed from line search for step-size $\gamma$ at $t^{th}$ iteration}\\
$\quad \quad \tilde{w}^{(i)}_t \leftarrow (1-\gamma_t ) \tilde{w}^{(i)}_{t-1} + \gamma_t \frac{\sigma^{(i)}} {\tilde{\sigma}_{f_t}} \mathbf{1}_{f_t}$ \qquad \qquad \quad \textcolor{blue}{$\triangleleft$ add or reweight $f_t^{th}$ data point }\\
\quad \textbf{end for}\\
\textbf{end for}\\
\textbf{return} $\{\tilde{w}, p\} = \{\tilde{w}^{(I)}_{m-1}, \hat{p}^{(I)}\} = \argmax_{i \in [K]}\,\tilde{J}(\tilde{w}^{(i)}_{m-1}, \hat{p}^{(i)})$\\
\hline
\end{tabular}}
\end{table}\label{algo1}
The proof of the first inequality in (\ref{theorem1}) can be obtained in a straightforward manner from Jensen's inequality. In particular, applying Jensen's inequality to the convex function $\phi(.):= (.)^2$, one has:
\be \phi(\mathbf{E}_{\theta \sim \nu}(.)) \leq \mathbf{E}_{\theta \sim \nu}\,[\phi(.)]\ee Thus, choosing, $(.)$ as $ L(\tilde{w}, \theta) - L(\theta)$ and $L(\tilde{w}, \theta)$  respectively, one gets:
\bea
| \left[ \mathbf{E}_{\theta \sim \nu}\, (L(\tilde{w}, \theta) - L(\theta)) \right] | &\leq & (\mathbf{E}_{\theta \sim \nu}\, (L(\tilde{w}, \theta) - L(\theta))^2)^{1/2} = || L(\tilde{w}, \theta) - L(\theta) ||_{\nu, 2}, \nonumber \\
|\left[ \mathbf{E}_{\theta \sim \nu}\, (L(\tilde{w}, \theta) \right] |  &\leq & (\mathbf{E}_{\theta \sim \nu}\, (L(\tilde{w}, \theta) )^2)^{1/2} = || L(\tilde{w}, \theta) - L(\theta) ||_{\nu, 2}
 \eea
\bea
\implies | R(\nu) | &=& |\mathbf{E}_{\theta \sim \nu}\, L(\theta)| \leq |\left[ \mathbf{E}_{\theta \sim \nu}\, (L(\tilde{w}, \theta) \right] | + | \left[ \mathbf{E}_{\theta \sim \nu}\, (L(\tilde{w}, \theta) - L(\theta)) \right] | \nonumber \\ &\leq& || L(\tilde{w}, \theta) ||_{\nu, 2} + || L(\tilde{w}, \theta) - L(\theta) ||_{\nu, 2} 
\eea where we have used the definition of the expected full sample risk from (\ref{expected-sample-risk}).

The details of the proof of the second inequality in (\ref{theorem1}) are much more involved and are provided in the appendix. Here we point out some salient features. First we can write
\be \label{objective2} || L(\tilde{w}, \theta) - L(\theta) ||^2_{\nu, 2} = (\tilde{w}-p)^T \tilde{K} (\tilde{w} -p)\,\;\; \mathrm{with}\;\; \tilde{K}_{n,m} := \langle l_n, l_m\rangle, \ee For notational simplicity we denote the RHS above 
as $ \tilde{J}(\tilde{w},p)$, which we interpret as an objective function to optimize. The above problem is similar to the Hilbert coreset problem posed in (\ref{FW-init}) with $J(w)$ in  (\ref{FW-init}) replaced 
by a slightly different quantity $\tilde{J}(\tilde{w}, p)$. In contrast to (\ref{FW-init}), the objective function $\tilde{J}(\tilde{w}, p)$ in (\ref{objective2}) depends on two vectors $\{\tilde{w}, p\}$. Since the goal is to provide a bound for all $D \in \mathcal{D}_{CH}^{(K)}$, one can consider the following optimization problem by adapting the problem in (\ref{FW-init}) to this case:
\bea \label{wp-constraints}
& & \max_{D = \{p_n\} \in \mathcal{D}_{CH}^{(K)}}\, \min_{\tilde{w}}\,(\tilde{w}-p)^T \tilde{K} (\tilde{w}-p) \\
&s.t. & \sum_{n=1}^N \tilde{w}_n ||l_n|| = \sum_{n=1}^N p_n ||l_n||, \, \tilde{w} \geq 0 \nonumber \\
&s.t. & p \in \mathcal{D}_{CH}^{(K)}(\{\hat{p}^{(i)}\})_{i \in [K]},\; \sum_{n=1}^N p_n = 1, p \geq 0. \nonumber
\eea Please refer to the appendix for the full proof. An important point to note about the result in Theorem \ref{theorem1} is that the upper bound on the full sample risk denoted by the RHS in (\ref{theorem1-ineqB}) approximately goes like $\hat{\beta}^m$ even for moderate number of Frank-Wolfe iterations $m$.

Theorem (\ref{theorem1}) is valid for the $\nu$-weighted $L2$-norm. However, this norm and the associated inner product involves computing an expectation over the parameters $\theta$ (see (\ref{l2-norm}) and (\ref{innerprod})) which does not admit a closed form evaluation in general. The expectation can be estimated in an unbiased manner by the method of random projections, as in \cite{Campbell2017, Campbell2018, Rahimi2008}. In particular, we construct a random projection of each vector $l_n(\theta); n \in [N]$  into a finite ($J$) dimensional vector space using $J$ samples $\theta \sim \nu(\theta)$:
\be \label{rand-proj}u_n = \sqrt{\frac{1}{J}}[l_n(\theta_1), l_n(\theta_2), ..., l_n(\theta_J)]^T, n \in [N],\ee
which serves as finite dimensional approximation of $L_n(\theta)$. In particular, for all $n, m \in [N]$, we have: \be \langle l_n, l_m\rangle \approx u_n^T\,u_m.\ee Leveraging similar techniques as in \cite{Campbell2017}, we can construct \emph{Algorithm 2}, and arrive at the following result.
\begin{table}
\centering
\scalebox{1.3}{%
\begin{tabular}{|l|}
\hline
\textbf{Algorithm 2 (RP-FW)}\\
\hline
\bf{Given}: $\{l_n(\theta)\}_{n=1}^N, \nu (\theta), J, m, \mathcal{D}_{CH}^{(K)}(\{\hat{p}^{(i)}\})_{i \in [K]}$ \\
$\cdot$ $\nu (\theta)$: $\{\theta_j\}^J_{j=1} \sim \nu (\theta) $ \qquad \qquad \qquad \qquad  \qquad \qquad  \textcolor{blue}{$\triangleleft$ take $J$ samples from posterior $\nu(\theta)$}\\
$\cdot$  $u_n = \sqrt{\frac{1}{J}} [l_n(\theta_1),...,l_n(\theta_J)]^T$  $\forall n \in [N]$ \quad  \qquad \qquad \textcolor{blue}{$\triangleleft$ construct random projection vector for $l_n(\theta)$ using (\ref{rand-proj})}\\
$\cdot$ \textbf{return} $\{\tilde{w},p\} = FW[u_n, m, (.)^T (.), \mathcal{D}^{(K)}_{CH} (\{\hat{p}^{(i)}\})_{i \in [K]}]$ \; \textcolor{blue}{$\triangleleft$ construct coreset with random projection vectors $u_n$}\\ 
\qquad \qquad \qquad \qquad \qquad \qquad \qquad \qquad \qquad \qquad  \textcolor{blue}{by applying \emph{Algorithm 1} with inner product $\langle u_n, u_m\rangle = u_n^T\,u_m$}\\
\hline
\end{tabular}}
\end{table}\label{algo2}

\begin{theorem} \label{theorem2}
Given the general setup of theorem \ref{theorem1}, for loss functions $l$ such that $l_n (\mu)\,l_m(\mu)$ (with $\mu \sim \nu(\theta) $) is sub-Gaussian with parameter $\xi^2$,  for $0 < \delta < 1$, for $\mathcal{D} = \mathcal{D}^{(K)}_{CH} (\{\hat{p}^{(i)}\})_{i \in [K]}$, the output of \emph{Algorithm 2} satisfies with probability $> 1- \delta$:
\bea \label{ineq2}
|R(\nu)| &\leq& ||L(\tilde{w}) ||_{\nu, 2} + || L(\tilde{w}) - L ||_{\nu, 2} \nonumber \\
&\leq& \left(|| u(\tilde{w})||^2_{\nu, 2} +  || \tilde{w} ||_1^2 \,\sqrt{\frac{2\xi^2}{J}\log{\frac{2N^2}{\delta}}}\right)^{1/2} + \left(|| u(\tilde{w}) - u ||^2_{\nu, 2} + || \tilde{w} - p||_1^2 \,\sqrt{\frac{2\xi^2}{J}\log{\frac{2N^2}{\delta}}}\right)^{1/2},  \label{ineq2-A}\\ 
&\leq& \left(|| u(\tilde{w})||^2_{\nu, 2} +  || \tilde{w} ||_1^2 \,\sqrt{\frac{2\xi^2}{J}\log{\frac{2N^2}{\delta}}}\right)^{1/2} + \left(  \frac{\hat{\sigma}\,\hat{\eta} \,\hat{\bar{\eta}} \,\hat{\beta}}{\sqrt{\hat{\bar{\eta}}^2\, \hat{\beta}^{-2(m-1)} + \hat{\eta}^2 \,(m-1)}} + || \tilde{w} - p||_1^2 \,\sqrt{\frac{2\xi^2}{J}\log{\frac{2N^2}{\delta}}}\right)^{1/2},  \label{ineq2-B}
\eea 
The quantities $\{ \hat{\sigma}, \hat{\eta},\,\hat{\bar{\eta}}, \hat{\beta}\}$ are defined as before. The sub-Gaussianity parameter $\xi$ can be estimated from the loss functions. For the $0-1$ loss function, a simple bound is $\xi \leq 1/2$. Also note that $\{\tilde{w}, p\} = \{\tilde{w}_{m-1}^{(I)}, \hat{p}^{(I)}\}$ where $I \in [K]$ is the index that maximizes $\tilde{J}(\tilde{w}^{(i)}_{M-1}, \hat{p}^{(i)})$ (see \emph{Algorithm 1}). The details of the proof can be found in the appendix.
\end{theorem}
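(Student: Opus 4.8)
The plan is to build directly on Theorem~\ref{theorem1} and to replace every exact $\nu$-weighted inner product $\inner{l_n}{l_m}=\tilde K_{n,m}$ by its random-projection estimate $u_n^{T}u_m$, controlling the resulting error through a single high-probability concentration event. The first inequality in the statement is inherited verbatim from the Jensen argument already given for Theorem~\ref{theorem1}, so no new work is needed there; all the effort goes into the second and third inequalities, (\ref{ineq2-A}) and (\ref{ineq2-B}).

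For (\ref{ineq2-A}) I would first note that, with $\theta_1,\dots,\theta_J$ drawn i.i.d.\ from $\nu$, the quantity $u_n^{T}u_m=\tfrac1J\sum_{j=1}^{J}l_n(\theta_j)\,l_m(\theta_j)$ is an unbiased empirical average of $l_n(\mu)l_m(\mu)$, whose mean is precisely $\inner{l_n}{l_m}=\tilde K_{n,m}$. Under the sub-Gaussianity hypothesis with parameter $\xi^{2}$, the centered average is sub-Gaussian with parameter $\xi^{2}/J$, so a Hoeffding-type tail bound gives $\Pr\big(|u_n^{T}u_m-\tilde K_{n,m}|\ge t\big)\le 2\exp(-Jt^{2}/2\xi^{2})$ for each fixed pair $(n,m)$. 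A union bound over all $N^{2}$ pairs, with the total failure probability set to $\delta$, yields $t=\Delta:=\sqrt{\tfrac{2\xi^{2}}{J}\log\tfrac{2N^{2}}{\delta}}$; that is, on an event of probability $>1-\delta$ the bound $|u_n^{T}u_m-\tilde K_{n,m}|\le\Delta$ holds simultaneously for every $(n,m)$. I would then expand the two $\nu$-weighted norms as quadratic forms, $\norm{L(\tilde w)}_{\nu,2}^{2}=\tilde w^{T}\tilde K\tilde w$ and $\norm{L(\tilde w)-L}_{\nu,2}^{2}=(\tilde w-p)^{T}\tilde K(\tilde w-p)$, together with their projected counterparts built from $\hat K_{n,m}:=u_n^{T}u_m$. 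On the good event the entrywise estimate propagates to the quadratic forms, $|\tilde w^{T}(\tilde K-\hat K)\tilde w|\le\Delta\,\onrm{\tilde w}^{2}$ and $|(\tilde w-p)^{T}(\tilde K-\hat K)(\tilde w-p)|\le\Delta\,\onrm{\tilde w-p}^{2}$, and taking square roots produces exactly the two factors appearing in (\ref{ineq2-A}).

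The third inequality (\ref{ineq2-B}) follows because the projected vectors $u_n$ genuinely live in the finite-dimensional Euclidean, hence Hilbert, space $\rn{J}$ equipped with the inner product $u_n^{T}u_m$. Thus Algorithm~1 executed on these vectors (via Algorithm~2) is an honest Frank--Wolfe iteration in that space, and the convergence guarantee underlying Theorem~\ref{theorem1} applies without change. This bounds the projected residual $\norm{u(\tilde w)-u}_{\nu,2}$ by $\hat\sigma\,\hat\eta\,\hat{\bar{\eta}}\,\hat\beta\big/\sqrt{\hat{\bar{\eta}}^{2}\hat\beta^{-2(m-1)}+\hat\eta^{2}(m-1)}$, which, inserted into the second factor of (\ref{ineq2-A}), yields (\ref{ineq2-B}).

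The main obstacle I anticipate is the coupling between the randomness of the projection and the data-dependence of the output: the coreset weights $\tilde w$ and the selected distribution $p$ are themselves functions of the very samples $\theta_1,\dots,\theta_J$ that define $\hat K$, so $\tilde w$ cannot be treated as independent of the estimation error. The clean resolution is to observe that the concentration event is defined purely through the fixed index set $\{(n,m)\}$ of kernel entries and makes no reference to the weights; consequently, on that single event the entrywise bound $|\tilde K_{n,m}-\hat K_{n,m}|\le\Delta$ holds simultaneously for \emph{every} admissible weight vector, and in particular for the random $\tilde w$ and $p$ returned by Algorithm~2. Ensuring that the union bound is taken \emph{before}, and independently of, the choice of weights is the step that demands care; once that logical ordering is fixed, the quadratic-form and square-root manipulations are routine.
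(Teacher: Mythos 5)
Your proposal is correct and follows essentially the same route as the paper's appendix proof: a Hoeffding tail bound for the sub-Gaussian products $l_n(\theta)l_m(\theta)$, a union bound over the $N^2$ kernel entries to obtain the uniform deviation $\sqrt{\tfrac{2\xi^2}{J}\log\tfrac{2N^2}{\delta}}$, propagation through the quadratic forms via the $\ell_1$ norms of $\tilde{w}$ and $\tilde{w}-p$, and the Frank--Wolfe guarantee applied to the projected vectors in $\mathbf{R}^J$ to pass from (\ref{ineq2-A}) to (\ref{ineq2-B}). Your explicit remark that the concentration event is defined over the fixed index set of kernel entries, and hence holds simultaneously for the data-dependent output $\{\tilde{w},p\}$ of Algorithm 2, is a point the paper leaves implicit but does not alter the argument.
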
   

\section{Experiments and Instantiation of Bounds}\label{expts}

In this section, we present our experimental setup for computing Hilbert coreset bounds for three datasets: i) Binary MNIST, and ii) CIFAR-100. MNIST is a very well known image classification dataset and is usually treated as a multiclass problem. We follow the results of \cite{Dziugaite2017} and create a binary classification dataset out of it by labeling the digits 0-4 with class `-1' and digits 5-9 with class `1'. The CIFAR-100 datasets is also a well known dataset with hundred different classes of images. 

The distribution on the data in each case is assumed to be a uniform distribution for simplicity, with $p_n = \frac{1}{N}$ for a full sample of size $N$. In other words, the class of data distributions, e.g. the convex hull $\mathcal{D}_{CH}^{(K)}$ is simply a point. This can be extended in a straightforward manner to cases where  $\mathcal{D}_{CH}^{(K)}$ using the methods described in the appendix. Finally, the loss function $l$ used for the computation of bounds is the standard 0-1 loss used in classification. Further experimental details for each of these datasets are provided below. 

\begin{subsection}{Binary MNIST}\label{mnist}
The MNIST dataset is obtained from Keras. This dataset is split into the training set (60000 images) and test set (10000 images).
We convert this into a binary classification dataset as discussed above. The images are preprocessed by scaling them to float values in the range $[0,1]$.
Keras is used to construct a simple neural network binary classifier, similar to that in \cite{Dziugaite2017}.
In particular, the hypothesis class $\mathcal{H}$ for this dataset is taken to be a feed-forward neural network which has three hidden layers each with 600 nodes and a ReLu activation function.
The output layer of the network uses a sigmoid activation function. The weights at each node are initialized according to a truncated normal distribution with mean $0$ and standard deviation $0.04$.
The biases of the first hidden layer are 0.1, whereas the biases of the second hidden layer and output layer are initialized to 0. 

The model is trained on different random subsets of the MNIST dataset using the cross-entropy loss (however the bounds will be provided with respect to the 0-1 loss).The first checkpoint uses 500 data points from the training set. The second checkpoint used 3000 data points, including the 500 from the first checkpoint.The third checkpoint uses 10000 data points, including the 3000 already used, and the final checkpoint uses 30000 data points. The models were trained for 120 epochs for each checkpoint.
\end{subsection}




\begin{subsection}{CIFAR-100}\label{cifar100}
For this dataset, we use a state-of-the art neural network architecture as the hypothesis class and a training algorithm is described in Xie et. al. \cite{xie2020muscle}.
The reason for doing this is the following. As a performer on DARPA Learning with Less Labeling (LwLL) program, the goal of our team was to develop novel theoretical bounds 
for other performers who are working on the model-building and training of new algorithms for efficient learning of classifiers with fewer labels. As part of this exercise, we approached the authors in \cite{xie2020muscle}
(who are part of the Information Sciences Institute (ISI) team) for the training models developed by them and developed theoretical bounds for those models (the associated hypothesis classes $\mathcal{H}$). As we will show below, we obtained tight non-vacuous upper bounds for such hypothesis classes as well, indicating the power and diversity of the proposed approach. 

We summarize the training algorithm used in \cite{xie2020muscle} for completeness. \cite{xie2020muscle} uses mutual-information based unsupervised $\&$ semi-supervised Concurrent learning (MUSCLE), a hybrid learning approach that uses mutual information to combine both unsupervised and semisupervised learning. MUSCLE can be used as a standalone training scheme for neural networks, and can also be
incorporated into other learning approaches. It is shown that the proposed hybrid model outperforms state of the art for the CIFAR-100 dataset as well as others. The basic idea of MUSCLE is to combine combine 
the loss from (semi)-supervised learning methods including real and pseudo labels with a loss term arising from the mutual information between two different transformed versions of the input sample ($x_i$). Furthermore, a consistency loss that measures the consistency of the prediction on an input sample and an augmented sample, can be added as a regularization term. Please refer to  \cite{xie2020muscle} and references therein for further details about the training algorithm and neural network models (hypothesis classes). 

The model is trained with different random subsets of the data, at the 500, 2500, 4000 and 10000 checkpoints.

\end{subsection}

\subsection{Choice of Prior and Posterior Distributions} \label{prior-posterior}

As explained in section \ref{intro}, the theoretical bounds we develop are stochastic bounds. So, one has to choose an appropriate prior distribution (any distribution independent of the data) and a posterior distribution (that depends on the training data and the training algorithm in general). In our analysis, we choose the prior distribution to be $\pi_0 = N(0, \sigma^2\cdot \mathbf{I})$. Inspired by the analysis of Banerjee {\it et al} \cite{Banerjee2020}, we use a training data-dependent posterior $\nu_S = N(\theta^{\dag}, \Sigma^S_{\theta^{\dag}})$, where $\theta^{\dag}$ denotes the parameters \emph{learned} by training the models (in the previous subsections) on the training set $S$, and the covariance $\Sigma_{\theta^{\dag}}$ is defined as follows: 
\bea \label{hessian}
\Sigma^S_{\theta^{\dag}} &=& diag(\sigma_j^2); \; \sigma_j^{-2} := \max\,[H^S_{j,j}(\theta^{\dag}), \sigma^{-2}], \;\;j= 1,2,..,dim(\Theta) \nonumber\\
\mathrm{where}\;\; H^S(\theta^{\dag}) &:=& \frac{1}{N_S} \sum_{i=1}^{N_S} \, \nabla^2\,l(y_i, \phi(\theta^{\dag}, x_i))\;\;\mathrm{(Diagonal\; Hessian)}
\eea
The anisotropy in the posterior $\nu_S$ can be understood as follows. For directions $j$ in parameter space where $H^S_{j,j}(\theta^{\dag})$ is \emph{larger} than $\sigma^{-2}$, the posterior variance is determined by the Hessian, and will be small enough such that $\theta_j$ will not deviate far from the learned parameters $\theta^{\dag}_j$ when sampling from the posterior $\nu_S$. On the other hand, for directions $j$ in parameter space where $H^S_{j,j}(\theta^{\dag})$ is \emph{smaller} than $\sigma^{-2}$, the posterior variance $\sigma_j^2$ is the same as that of the prior $\sigma^2$. Thus, the posterior standard deviation is upper bounded by the prior standard deviation and different weights and biases are assumed to be independent. One can optimize over the prior standard deviation $\sigma$ to get a good bound. 

\subsection{Bound Calculation Workflow}\label{workflow}

Given the data points $(x_n, y_n), n=1,2,..,N_S$ of the training set checkpoint $S$ of a particular dataset, a trained model (with the learned parameters $\theta^{\dag}$) for that checkpoint, and a prior distribution $\pi_0= N(0, \sigma^2\cdot \mathbf{I})$ with an appropriately chosen $\sigma$, the workflow for computing the coreset bounds corresponding to the training set $S$ is as follows:
\begin{itemize}
 \item We compute the diagonal Hessian of the (cross-entropy) loss for the trained model at the trained parameters $H^S_{j,j}(\theta^{\dag})$ for the training set $S$ via the methods described in \cite{Sagun2017}. Then we compute the posterior variance($\sigma_j^2$) as described in section \ref{prior-posterior}. This gives the data dependent posterior $\nu_S$ for the training checkpoint $S$.
 \item Given $\nu_S$, we compute an approximation to the inner product $\langle \cdot \rangle$ by the method of \emph{random features projection} \cite{Rahimi2008, Campbell2017} as in \emph{Algorithm 2} which converge to the true inner products {\it with high probability} as the number of samples from the posterior ($J$) increases (see Theorem \ref{theorem2}). 
 \item From the inner products computed with the posterior for the training set $S$, and using the full dataset of size $N$, we compute all the constants $\{ \hat{\sigma}, \hat{\eta},\,\hat{\bar{\eta}}, \hat{\beta}\}$ in Theorems \ref{theorem1} and \ref{theorem2} and then implement \emph{Algorithm 1} with number of iterations $m$ equal to training set checkpoint size $N_S$ to find the coreset weights $\tilde{w}^{(I)}_{m-1}$ and the appropriate probability vector $\hat{p}^{(I)}$.
 \item Finally, using the coreset weights and the inner product $(.)^T\,(.)$, we compute \emph{coreset upper bound-I} (first term in the RHS of (\ref{ineq2-A})) as well as \emph{coreset upper bound-II} (RHS of (\ref{ineq2-B})).
\end{itemize}

The above workflow was carried out for all different training checkpoints (of increasing sizes) for both datasets mentioned in sections \ref{mnist} and \ref{cifar100}.

\subsection{Experimental Results}\label{results}

\begin{figure}[h]
\begin{tabular}{ll}
\includegraphics[scale=0.6]{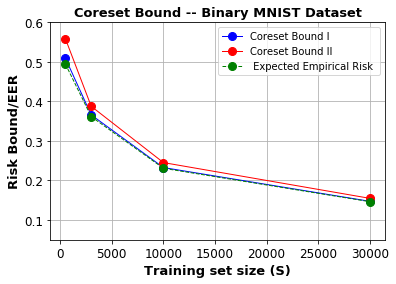}
&
\includegraphics[scale=0.6]{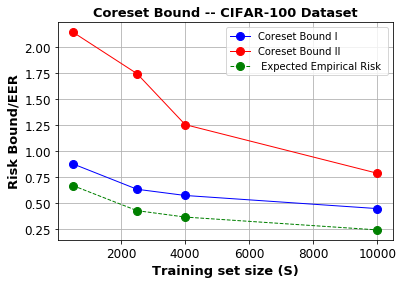}
\end{tabular}
\caption{\textbf{Left}: \footnotesize{\emph{Coreset Upper Bound - I}  (RHS of (\ref{ineq2-A}), shown in blue) and \emph{Coreset Upper Bound -II} (RHS of (\ref{ineq2-B}), shown in red) on the expected full sample risk for the three-layer ReLU DNN model for four training checkpoints of the Binary MNIST dataset with prior standard deviation $\sigma = 10^{-4}$, based on Theorem \ref{theorem2}. The expected empirical risk (EER) of the trained model on the full sample is shown in green.}
\textbf{Right}: \footnotesize{\emph{Coreset Upper Bound - I}  (RHS of (\ref{ineq2-A}), shown in blue) and \emph{Coreset Upper Bound -II} (RHS of (\ref{ineq2-B}), shown in red) on the expected full sample risk for the Xie \emph{et al} \cite{xie2020muscle} model for four training checkpoints of the CIFAR-100 dataset with prior standard deviation $\sigma = 10^{-4}$, based on Theorem \ref{theorem2}. The expected empirical risk (EER) of the trained model on the full sample is shown in green.}
}\label{plots}
\end{figure}
For our numerical results, we choose $J$, the number of samples from the posterior (see section \ref{workflow}), as 7000. We experimented with other values of $J$ and found that the numerical results for the bounds were stable with $J \gtrsim 7000$. We also experimented with the choice of the prior standard deviation $\sigma$. As discussed in section \ref{prior-posterior}, the posterior standard deviation is upper bounded by $\sigma$. So, we would like to choose a low-value of $\sigma$ to come up with an \emph{effectively deterministic} bound. After some experimentation, we chose a value of $\sigma = 10^{-4}$ that seemed to provide good bounds which were effectively deterministic in the sense that the expected empirical risk (EER) is very close to the empirical risk. For each dataset, we use the number of Frank-Wolfe iterations $m$ (in \emph{Algorithm 1}) equal to the size of the training set checkpoint $S$. We find that the size of the coreset (the number of non-zero weights) is \emph{much smaller} than the number of Frank-Wolfe iterations or the size of the training set $S$. For example, for the CIFAR-100 dataset, the coreset size is only $\sim150$, around two orders of magnitude smaller than the size of the largest training set checkpoint $S = 10000$. This is also consistent with results in \cite{Campbell2017, Campbell2018}. 

The left plot in Figure \ref{plots} shows that the bounds for the MNIST model are very tight. For example, the empirical expected full sample risk at the first checkpoint with 500 samples (0.49) is quite close to the \emph{coreset upper bound-I} (0.51) and the \emph{coreset upper bound-II} (0.56). We get tight non-vacuous bounds for the DNN hypothesis class used in section \ref{mnist} for all training checkpoints. Also, the coreset upper bounds obtained for the training checkpoint with 30000 samples ($0.145\, \& \,0.154$) are better than that obtained in \cite{Dziugaite2017} for 55000 samples (0.207). As discussed earlier, we use the same model and training procedure as \cite{Dziugaite2017} to allow a fair comparison.

The right plot in Figure \ref{plots} shows the results for the CIFAR-100 model with the state-of-the-art neural network architecture and training algorithm as in \cite{xie2020muscle}. For this model, the \emph{coreset upper bound-I} is non-vacuous at all training checkpoints, while the  \emph{coreset upper bound-II} is non-vacuous for training set sizes larger than around 8000. Thus we are able to compute meaningful and effectively deterministic upper bounds for the full sample risk for complex datasets (like CIFAR-100) and state-of-the-art hypothesis classes and training algorithms, as in \cite{xie2020muscle} in section \ref{cifar100}. 

The constants in the bounds in Theorems \ref{theorem1} and \ref{theorem2} depend on the posterior distribution $\nu$. As discussed in section \ref{workflow}, the posterior distribution $\nu_S$ is computed for each training checkpoint $S$. Therefore, the constants in the bounds plotted in Figure \ref{plots} also change with each training checkpoint $S$. An important point to note is that the bounds for both datasets/models decrease significantly with the size of the training set $S$. This is in contrast to some other recent bounds for DNNs that may even increase with the size of the training set \cite{Bartlett2017, Golowich2017, Neyshabur2018}, see discussion in \cite{Nagarajan2019}. 

Finally, as pointed out in the beginning of section \ref{expts}, we have assumed a uniform distribution on the data for simplicity implying that the class of data distributions $\mathcal{D}$ is trivial. However, it is straightforward to extend these results to non-trivial classes like $\mathcal{D}_{CH}^{(K)}$, using the methods described in the appendix. The expected empirical full sample risk will still be upper bounded by the bounds computed for these data distribution classes as long as the uniform distribution belongs to $\mathcal{D}_{CH}^{(K)}$.

\begin{section}{Conclusions}

In this paper, we have developed the formalism for computing upper bounds on the \emph{expected full sample risk} for supervised classification tasks within a transductive setting using the method of Hilbert coresets. We compute explicit numerical bounds for complex datasets and complex hypothesis classes (such as state-of-the-art DNNs) and compare these to the performance of modern training algorithms. The bounds obtained are tight and meaningful. The bounds we develop are stochastic, 
however they are effectively deterministic for appropriate choice of priors. The bounds are \emph{non-uniform} in the hypothesis space, since the precise numerical values of the bound depend on the learned weights of the training algorithm ($\theta_{\star}$). Finally, the bounds we develop decrease significantly with the size of the training set ($S$)  as can be seen from section \ref{results}. 

There are a couple of avenues for future research. First, it should be possible to derive generalization bounds for other classes of data distributions $\mathcal{D}$, such as in which the probability vector for the full sample lies inside an ellipsoid. Second, as mentioned in section \ref{results}, the coreset size is much smaller than the size of the training set $S$ for a given training checkpoint. This suggests that it might be possible to develop a model-free (agnostic) active learning approach leveraging the coresets framework.  For example, given a prior and an initial (presumably small) set of labeled data, one can use the coresets approach to construct an initial posterior on the hypothesis space. Using this posterior, one can make predictions on the unlabeled data and compute a \emph{pseudo risk} for the entire dataset. Then one could use this \emph{pseudo-risk} to construct the next set of coreset weights and update the coreset weighted risk. At each step, one should also update the posterior appropriately. The goal would be to show that this procedure converges and provides an upper bound on the expected true risk. The hope is that this should provide much tighter generalization and sample complexity bounds. 

\end{section}

\section{Acknowledgement}
The material in this paper is based upon work supported by the United States Air Force and DARPA under Contract No. FA8750-19-C-0205 for the DARPA I20 \emph{Learning with Less Labeling (LwLL)} program. S.D., P.K., and R.K.P are grateful to Prof. T. Broderick for providing inspiration to the ideas in this paper as well as providing feedback on the manuscript.  The authors would like to particularly thank Prof. V. Chandrasekaran and Prof. A. Willsky for insightful discussions and comments on the manuscript.

\appendix

\subsection{Proof of Theorem \ref{theorem1} for  $\mathcal{D} = \mathcal{D}_{CH}^{(K)}(\hat{p}^{(I)}), I=1,2..,K$ } \label{polytope}

In this case,  we define $\mathcal{D}$ as the convex hull of $K$ points in the probability simplex characterized by $N$ points.

\begin{lemma} \label{polytope-max}
Let  $\mathcal{D}_{CH}^{(K)}(\hat{p}^{(I)})$ be the convex hull of $\{\hat{p}^{(I)}; 1 \leq I \leq K\}$, where each $\hat{p}^{(I}$ is 
an $N$-dimensional probability vector with $\sum_{n=1}^N\, \hat{p}^{(I)}_n = 1; \hat{p}^{(I)}_n > 0\; \forall n \in [N]$.
Then the optimal solution to $\max_{p \in \mathcal{D}_{CH}^{(K)}}\, \tilde{J}(\tilde{w}, p)$ for a fixed $\tilde{w}$ is $\hat{p}^{(k)}$ for some $k$ in $\{1,2,..,K\}$. 
\end{lemma}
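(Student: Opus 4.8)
The plan is to reduce the statement to the elementary fact that a convex function attains its maximum over a polytope at one of its vertices. Since $\tilde{J}(\tilde{w},\cdot)$ is a quadratic form in $p$ governed by the Gram matrix $\tilde{K}$, and $\mathcal{D}_{CH}^{(K)}$ is by definition the convex hull of the finitely many points $\{\hat{p}^{(I)}\}_{I\in[K]}$, the whole argument hinges on verifying that this quadratic is \emph{convex} in $p$. Everything else is then a one-line application of Jensen's inequality.

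First I would establish that $\tilde{K}$, with entries $\tilde{K}_{n,m}=\inner{l_n}{l_m}=\mathbf{E}_{\theta\sim\nu}[l_n(\theta)\,l_m(\theta)]$, is positive semidefinite. This is immediate because $\tilde{K}$ is a Gram matrix with respect to the $\nu$-weighted inner product (\ref{innerprod}): for any $a\in\mathbf{R}^N$,
\be
a^T\tilde{K}\,a = \mathbf{E}_{\theta\sim\nu}\!\left[\Big(\sum_{n=1}^N a_n\,l_n(\theta)\Big)^2\right] \geq 0.
\ee
Consequently the map $p\mapsto \tilde{J}(\tilde{w},p)=(\tilde{w}-p)^T\tilde{K}(\tilde{w}-p)$ has Hessian (in $p$) equal to $2\tilde{K}\succeq 0$, so it is a convex function of $p$ for the fixed vector $\tilde{w}$.

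Next I would use this convexity directly. Any feasible $p\in\mathcal{D}_{CH}^{(K)}$ admits a representation $p=\sum_{I=1}^K\lambda_I\,\hat{p}^{(I)}$ with $\lambda_I\geq 0$ and $\sum_I\lambda_I=1$, and Jensen's inequality gives
\be
\tilde{J}(\tilde{w},p) = \tilde{J}\left(\tilde{w},\sum_{I=1}^K\lambda_I\,\hat{p}^{(I)}\right) \leq \sum_{I=1}^K\lambda_I\,\tilde{J}(\tilde{w},\hat{p}^{(I)}) \leq \max_{I\in[K]}\tilde{J}(\tilde{w},\hat{p}^{(I)}).
\ee
Taking $p=\hat{p}^{(k)}$ for the maximizing index $k=\argmax_{I\in[K]}\tilde{J}(\tilde{w},\hat{p}^{(I)})$ attains the right-hand side, so the maximum over the polytope is realized at a vertex, which is the claim. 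There is no genuine obstacle in this argument; the only point that truly requires care is confirming that the quadratic form is convex rather than concave or indefinite, which rests entirely on the positive semidefiniteness of the Gram matrix $\tilde{K}$. I would also remark that since $\tilde{K}$ need only be positive \emph{semi}definite, the maximizer need not be unique, but existence of a vertex maximizer is all the lemma asserts; this reduction is exactly what licenses the outer loop over $i\in[K]$ in \emph{Algorithm 1} and the definition of the constants in (\ref{consts}) as maxima over the $K$ vertices.
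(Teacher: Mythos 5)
Your proof is correct, and it takes a genuinely different route from the paper's. The paper argues by perturbation: it picks a non-extreme point $p_0$, expands the quadratic around it, finds a direction $d$ with $p_0\pm d$ feasible and $\nabla\tilde{J}(\tilde{w},p_0)^Td\geq 0$, and concludes $p_0$ is suboptimal, asserting along the way that $\tilde{K}$ is positive definite ``because $\tilde{J}$ is quadratic'' --- a justification that is not really a justification, and which is needed there to make the displayed inequality strict. Your route instead writes an arbitrary $p=\sum_I\lambda_I\hat{p}^{(I)}$ and applies Jensen's inequality to the convex function $p\mapsto(\tilde{w}-p)^T\tilde{K}(\tilde{w}-p)$, with convexity certified by the Gram-matrix identity $a^T\tilde{K}a=\mathbf{E}_{\theta\sim\nu}[(\sum_n a_n l_n(\theta))^2]\geq 0$. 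This buys you two things the paper's version lacks: the correct reason why the quadratic form is convex (it is a Gram matrix, hence positive \emph{semi}definite --- positive definiteness can fail if the $l_n$ are linearly dependent in $L^2(\nu)$), and a one-step conclusion that a vertex attains the maximum, whereas the paper's argument as written only shows each non-extreme point is dominated by some other feasible point and leaves the passage to ``the maximum is at a vertex'' implicit. What the paper's perturbation argument would buy, if its positive-definiteness claim were repaired, is strict suboptimality of every non-vertex point; your version deliberately does not claim uniqueness, which is all the lemma needs and is the honest statement in the merely-PSD case.
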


\begin{proof}
Let $p_0$ be any point which is \emph{not} a vertex in the convex hull and fix $\tilde{w}$.
Because $\tilde{J}$ is quadratic, we can represent the function as 

$$\tilde{J} (\tilde{w},p) = \tilde{J} (\tilde{w}, p_0) + \nabla \tilde{J} (\tilde{w}, p_0)^T (p - p_0) + (p - p_0)^T \tilde{K} (p - p_0)$$

Since $p_0$ is not an extreme point, there is a vector $d$ such that both $p_0 + d$ and $p_0 - d$ are in the convex hull.
Clearly, either $\nabla \tilde{J} (\tilde{w}, p_0)^T d \geq 0$ or $-\nabla \tilde{J} (\tilde{w}, p_0)^T d \geq 0$.
So without loss of generality, we can claim there is a vector $d$ such that $\nabla \tilde{J} (\tilde{W}, p_0)^T d \geq 0$.
Furthermore, because $\tilde{J}$ is quadratic, $\tilde{K}$ is positive definite.
So we conclude $p_0$ is suboptimal because

$$\tilde{J} (\tilde{w}, p_0) + \nabla \tilde{J} (\tilde{w}, p_0)^T (p - p_0) + (p - p_0)^T \tilde{K} (p - p_0) > \tilde{J} (\tilde{w}, p_0)$$
\end{proof}

We now provide proofs pertaining to \emph{Algorithm 1} for $\mathcal{D} = \mathcal{D}_{CH}^{(K)}$. 

\begin{lemma}
\emph{Algorithm 1} is equivalent to the Frank-Wolfe coreset construction of \cite{Campbell2017} for $K=1$.
\end{lemma}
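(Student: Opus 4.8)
The plan is to show that setting $K=1$ collapses the outer loop of Algorithm 1 to a single pass, and that under a simple linear change of variables this single pass is literally the Frank-Wolfe coreset recursion of \cite{Campbell2017} applied to the rescaled cost functions $\hat{L}_n := \hat{p}^{(1)}_n\,l_n$. When $K=1$ the convex hull $\mathcal{D}_{CH}^{(1)}$ is the single point $p = \hat{p}^{(1)}$, so the \emph{for} loop over $i$ executes exactly once, the $\argmax_{i \in [K]}$ in the \emph{return} statement is vacuous, and the output reduces to $\{\tilde{w}^{(1)}_{m-1}, \hat{p}^{(1)}\}$. It therefore suffices to match the one inner pass with the original construction step by step.

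First I would set up the dictionary between the two parametrizations. Identifying Campbell's additively decomposable cost with $\hat{L}_n := \hat{p}^{(1)}_n\,l_n$, we get $\hat{L} = \sum_n \hat{L}_n = \sum_n \hat{p}^{(1)}_n l_n = L^{(1)}$, and since $\hat{p}^{(1)}_n > 0$ for every $n$ (the standing hypothesis carried over from Lemma \ref{polytope-max}), we have $\norm{\hat{L}_n} = \hat{p}^{(1)}_n \norm{l_n}$ and $\sigma := \sum_n \norm{\hat{L}_n} = \sum_n \hat{p}^{(1)}_n \norm{l_n} = \sigma^{(1)}$. The essential cancellation is that the normalized directions coincide, $\hat{L}_n/\norm{\hat{L}_n} = l_n/\norm{l_n}$, because the common positive factor $\hat{p}^{(1)}_n$ divides out. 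Introducing the linear bijection $\tilde{w}_n = \hat{p}^{(1)}_n w_n$ (well-defined precisely because all $\hat{p}^{(1)}_n > 0$), the weighted costs agree, $\hat{L}(w) = \sum_n w_n \hat{L}_n = \sum_n \tilde{w}_n l_n = L(\tilde{w})$, and Campbell's polytope vertex $\frac{\sigma}{\norm{\hat{L}_n}}\mathbf{1}_n$ maps to $\frac{\sigma^{(1)}}{\norm{l_n}}\mathbf{1}_n$, which is exactly the vertex appearing in both the initialization and the update step of Algorithm 1.

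With this dictionary in hand I would verify the correspondence line by line. The objective $\tilde{J}(\tilde{w}, \hat{p}^{(1)}) = \norm{L(\tilde{w}) - L^{(1)}}^2_{\nu,2}$ equals Campbell's $\norm{\hat{L}(w) - \hat{L}}^2$, and the feasibility constraint $\sum_n \tilde{w}_n \norm{l_n} = \sum_n \hat{p}^{(1)}_n \norm{l_n}$ is the image of $\sum_n w_n \norm{\hat{L}_n} = \sigma$ under the change of variables. The greedy initialization $f_0 = \argmax_n \inner{L^{(1)}}{l_n/\norm{l_n}}$ equals $\argmax_n \inner{\hat{L}}{\hat{L}_n/\norm{\hat{L}_n}}$; the iteration index $f_t = \argmax_n \inner{L^{(1)} - L(\tilde{w}_{t-1})}{l_n/\norm{l_n}}$ equals $\argmax_n \inner{\hat{L} - \hat{L}(w_{t-1})}{\hat{L}_n/\norm{\hat{L}_n}}$; and substituting the identity $\frac{\sigma^{(1)}}{\norm{l_{f_t}}}l_{f_t} = \frac{\sigma}{\norm{\hat{L}_{f_t}}}\hat{L}_{f_t}$ into the closed-form line search shows $\gamma_t$ is the same in both. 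Hence the iterates $\tilde{w}_t = (1-\gamma_t)\tilde{w}_{t-1} + \gamma_t \frac{\sigma^{(1)}}{\norm{l_{f_t}}}\mathbf{1}_{f_t}$ are exactly the images under $\mathrm{diag}(\hat{p}^{(1)})$ of Campbell's iterates, so the two procedures return the same coreset at every step.

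The bookkeeping above is routine; the one point that genuinely needs care — and which I regard as the crux — is checking that the division by $\hat{p}^{(1)}_n$ in the change of variables, and the normalization $\hat{L}_n/\norm{\hat{L}_n}$, are legitimate for every $n$. This is precisely where the hypothesis $\hat{p}^{(1)}_n > 0$ for all $n \in [N]$ enters: it makes $w \mapsto \mathrm{diag}(\hat{p}^{(1)})\,w$ a bijection of the feasible polytopes and guarantees that no cost function $\hat{L}_n$ vanishes identically, so that every normalized direction used in the Frank-Wolfe steps is well-defined. Once this is secured, the equivalence for $K=1$ follows immediately from the step-by-step matching.
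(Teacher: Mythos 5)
Your proposal is correct and follows essentially the same route as the paper's own proof: the change of variables $\tilde{w}_n = \hat{p}^{(1)}_n w_n$ together with the identification $L_n = \hat{p}^{(1)}_n l_n$ (so that $\hat{p}^{(1)}_n\norm{l_n} = \norm{L_n}$), which turns $\tilde{J}(\tilde{w},\hat{p}^{(1)})$ into Campbell's objective $(w-1)^T\hat{K}(w-1)$ and matches the polytope constraints. The paper states this in three lines and leaves the step-by-step matching of the iterates implicit; your version simply makes that bookkeeping, and the role of the positivity hypothesis $\hat{p}^{(1)}_n>0$, explicit.
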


\begin{proof}
For $K=1$, the outer maximization is trivial. Since all elements of $\hat{p}^{(1)}$ are non-zero, so we can convert back to $w$. Notice that the objective then becomes

\be \tilde{J}(\tilde{w},p) = (\tilde{w} - \hat{p}^{(1)})^T \tilde{K} (\tilde{w} - \hat{p}^{(1)}) = (w-1)^T \hat{K} (w-1) \ee after writing $w = \tilde{w}\,\hat{p}^{(1)}$ and identifying $\hat{p}^{(1)}_n\, l_n$ with $L_n$. 
Thus, we have:
\be \hat{p}^{(1)}_n ||l_n || = ||L_n ||, \ee 
and the solution to the original Frank-Wolfe algorithm in \cite{Campbell2017} is also a solution to \emph{Algorithm 1}. 
\end{proof}

\textit{Proof of Theorem \ref{theorem1} for $\mathcal{D}_{CH}^{(K)}; K > 1$}: 

Let us first define a number of constants that would be helpful later:
\bea 
\tilde{\sigma}_n &=& || l_n ||; \;
\sigma^{(i)} = \sum_{n=1}^N \hat{p}^{(i)}_n \tilde{\sigma}_n; \;
L^{(i)} = \sum_{n=1}^N \hat{p}^{(i)}_n l_n \nonumber \\
\hat{\bar{\eta}}^2 &=& \max_{n,m} \norm{ \frac{l_n}{|| l_n||} - \frac {l_m} {|| l_m ||} };\;
{\eta^{(i)}}^2 = 1 - \frac{||L^{(i)}||^2} {(\sigma^{(i)})^2}; \; {\beta^{(i)}}^2 = 1 - \frac{(r^{(i)})^2}{(\sigma^{(i)})^2 \bar{\eta}^2}, 
\eea
where $r^{(i)}$ is the shortest distance from the solution $\hat{p}^{(i)}$ to the relative boundary of the feasible region for $\tilde{w}$. The first inequality in Theorem \ref{theorem1} was proven in the main text. 
We now prove the second inequality (see (\ref{theorem1-ineqB})) in Theorem \ref{theorem1}.

\begin{theorem} \label{coreset-alg1}
Let $\hat{\sigma} = \max_{i \in [K]} \sigma^{(i)}$, $\hat{\eta} = \max_{i \in [K]} \eta^{(i)}$ , and $\hat{\beta} = \max_{i \in [K]} \beta^{(i)}$.
Then, with $L = \sum_{n=1}^N\, p_n l_n$ and $L(\tilde{w}) = \sum_{n=1}^N\,\tilde{w}_n\,l_n = \sum_{n-1}^N\,w_n p_n l_n $, the coreset constructed with \emph{Algorithm 1} gives rise to a Hilbert coreset of size at most $m$ with 
\be || L - L(\tilde{w}) || \leq \frac{\hat{\sigma}\hat{\eta}\hat{ \bar{\eta}} \hat{\beta}} {\sqrt{\hat{\bar{\eta}}^2 \hat{\beta}^{-2(m-1)} + \hat{\eta}^2 (m-1)}}\ee
\end{theorem}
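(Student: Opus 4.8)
The plan is to reduce the statement to $K$ independent applications of the single-vertex Frank–Wolfe convergence estimate, combine them through the $\argmax$ selection in \emph{Algorithm 1}, and finally replace the per-vertex constants by their maxima via a monotonicity argument. The single-vertex estimate and the reduction to vertices are essentially imported from earlier results; the real work is the monotonicity substitution.

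First I would invoke the preceding $K=1$ reduction lemma, which shows that for each fixed vertex $\hat{p}^{(i)}$ the inner loop of \emph{Algorithm 1} coincides with the Campbell–Broderick Frank–Wolfe coreset construction applied to the cost functions $L^{(i)}_n := \hat{p}^{(i)}_n\,l_n$ (the identity $\hat{p}^{(i)}_n\|l_n\| = \|L^{(i)}_n\|$ makes the polytope constraint match). Consequently the convergence bound (\ref{fw-bound0}) applies verbatim at each vertex after the $m-1$ inner iterations, giving
\be
\| L^{(i)} - L(\tilde{w}^{(i)}_{m-1}) \| \;\leq\; \frac{\sigma^{(i)}\,\eta^{(i)}\,\hat{\bar{\eta}}\,\beta^{(i)}}{\sqrt{(\hat{\bar{\eta}})^2\,(\beta^{(i)})^{-2(m-1)} + (\eta^{(i)})^2\,(m-1)}}, \qquad i \in [K],
\ee
where $\hat{\bar{\eta}}$ is the global quantity defined in (\ref{consts}) and hence common to all vertices.

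Next I would use the return rule $I = \argmax_{i}\tilde{J}(\tilde{w}^{(i)}_{m-1},\hat{p}^{(i)})$ together with the identification $\tilde{J}(\tilde{w}^{(i)}_{m-1},\hat{p}^{(i)}) = \|L^{(i)}-L(\tilde{w}^{(i)}_{m-1})\|^2$ from (\ref{objective2}). Taking square roots, the error of the returned coreset equals $\max_{i\in[K]}\|L^{(i)}-L(\tilde{w}^{(i)}_{m-1})\|$, which by the displayed per-vertex bound is at most the maximum over $i$ of the right-hand sides. It then remains to show that the map
\be
(\sigma,\eta,\beta)\;\longmapsto\; \frac{\sigma\,\eta\,\hat{\bar{\eta}}\,\beta}{\sqrt{\hat{\bar{\eta}}^2\,\beta^{-2(m-1)} + \eta^2\,(m-1)}}
\ee
is nondecreasing in each argument on the relevant ranges ($\sigma>0$, $\eta>0$, $0<\beta<1$), so that replacing $(\sigma^{(i)},\eta^{(i)},\beta^{(i)})$ by $(\hat{\sigma},\hat{\eta},\hat{\beta})$ only enlarges the bound and reproduces exactly the claimed expression. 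Monotonicity in $\sigma$ is immediate; in $\eta$ it follows since $\eta\mapsto \eta/\sqrt{\eta^2(m-1)+c}$ is increasing for any $c>0$; and in $\beta$ it follows because raising $\beta$ both increases the numerator and, as the exponent $-2(m-1)<0$ with $0<\beta<1$, decreases the term $\hat{\bar{\eta}}^2\beta^{-2(m-1)}$ in the denominator.

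Finally I would tie the worst-over-vertices estimate back to the worst case over the entire class via Lemma \ref{polytope-max}: since $\tilde{J}(\tilde{w},\cdot)$ is a convex quadratic in $p$ (because $\tilde{K}\succeq 0$), its maximum over the convex hull $\mathcal{D}_{CH}^{(K)}$ is attained at a vertex, so the $\argmax$ over the $K$ vertices is exactly the $\max$ over all $D\in\mathcal{D}_{CH}^{(K)}$ appearing in (\ref{wp-constraints}). I expect the monotonicity check in $\beta$ to be the only genuinely delicate point, since it is the competition between the increasing numerator and the decreasing negative-power term in the denominator — hinging on the sign conventions $0<\beta<1$ and $-2(m-1)<0$ — that legitimizes the substitution of $\hat{\beta}$; the reduction and the $\argmax$ step are bookkeeping once the single-vertex estimate is in hand.
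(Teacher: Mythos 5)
Your proposal is correct in substance and shares the same skeleton as the paper's argument (reduce to vertices via Lemma \ref{polytope-max}, run Frank--Wolfe per vertex, end up with the hatted constants), but it differs in where the constant substitution happens. The paper does \emph{not} invoke the single-vertex bound (\ref{fw-bound0}) as a black box: it re-derives the Frank--Wolfe recursion for the selected vertex $I$ from scratch --- initialization bound $\tilde{J}(\tilde{w}_0,\hat{p}^{(I)})\leq(\sigma^{(I)})^2(\eta^{(I)})^2\leq\hat{\sigma}^2\hat{\eta}^2$, then the per-step contraction $\tilde{J}(\tilde{w}_{t+1})\leq\hat{\beta}^2\tilde{J}(\tilde{w}_t)\bigl(1-\tilde{J}(\tilde{w}_t)/(\hat{\sigma}^2\hat{\bar{\eta}}^2\hat{\beta}^2)\bigr)$ --- substituting the maxima $\hat{\sigma},\hat{\eta},\hat{\beta}$ \emph{inside} the recursion at each step, and only then applies the sequence lemma (Lemma \ref{sequence}) once with $\alpha=\hat{\beta}^2$. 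You instead keep the per-vertex constants all the way to the closed-form bound and discharge them at the end via coordinate-wise monotonicity of $(\sigma,\eta,\beta)\mapsto\sigma\eta\hat{\bar{\eta}}\beta/\sqrt{\hat{\bar{\eta}}^2\beta^{-2(m-1)}+\eta^2(m-1)}$; your verification of that monotonicity (in particular the $\beta$ case, where the increasing numerator and the decreasing term $\beta^{-2(m-1)}$ with $0<\beta<1$ cooperate) is sound, and it is fine that $\hat{\sigma},\hat{\eta},\hat{\beta}$ may be attained at different vertices since the map is nondecreasing in each argument separately. Your route is more modular and makes the Campbell--Broderick result genuinely reusable; the paper's route avoids having to prove monotonicity of the final expression at the cost of repeating the whole convergence analysis. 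Two caveats, both inherited from the paper rather than introduced by you: (i) there is an off-by-one mismatch between the exponent $\hat{\beta}^{-2(m-1)}$ in the theorem statement and the $\hat{\beta}^{-2(m-2)}$ that the paper's own last display (and (\ref{fw-bound0})) actually produces, so citing (\ref{fw-bound0}) ``verbatim'' with exponent $-2(m-1)$ is not quite literal; (ii) the step identifying the returned $\argmax$ over vertices with the worst case over all of $\mathcal{D}_{CH}^{(K)}$ in the max--min problem (\ref{wp-constraints}) is asserted rather than proved in both your write-up and the paper --- Lemma \ref{polytope-max} handles the inner maximization over $p$ for \emph{fixed} $\tilde{w}$, which is not by itself the max--min statement.
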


\begin{proof}
In order to prove theorem (\ref{coreset-alg1}), we first borrow two lemmas from \cite{Campbell2017} which will be useful shortly. 

\begin{lemma}\label{interior}
If $\hat{p}^{(i)}$ has no non-zero entries, then $L^{(i)} \equiv \sum_{n=1}^N\,\hat{p}^{(i)}_n\,l_n$ is in the relative interior of the convex hull: $Conv\{ \frac {\sigma^{(i)}}{\tilde{\sigma}_n} l_n\}_{n=1}^N$
\end{lemma}
The proof the above is a simple consequence of Lemma A.5 in \cite{Campbell2017} and by the following identification:
\be  \frac{\sigma^{(i)}} {\tilde{\sigma}_n} l_n = \frac{\sigma^{(i)}} {\hat{p}^{(i)}_n || l_n||} \hat{p}^{(i)}_n l_n = \frac{\sigma^{(i)}} {||L_n||} L^{(i)}_n \ee

\begin{lemma}\label{sequence}
Let $\{x_n \}$ be a sequence of real numbers and $0 \leq \alpha \leq 1$ which satisfy $x_{k+1} \leq \alpha x_k (1-x_k)$ for all $k$.
Then for all $n$, \be x_n \leq \frac{x_0} {\alpha^{-n} + nx_0}\ee
\end{lemma}
This is listed as lemma A.6 in \cite{Campbell2017}. 

From Lemma \ref{polytope-max}, we know the outer maximization is solved by one of the vertices of the convex hull $\mathcal{D}_{CH}^{(K)}$. Therefore, following \emph{Algorithm 1} in Table 1, we first select a vertex $\hat{p}^{(i)}$ from the $K$ vertices in $\mathcal{D}_{CH}^{(K)}$ at random. Then for that $\hat{p}^{(i)}$, we use the Frank-Wolfe algorithm to minimize $\tilde{w}$ subject to a convex polytope constraint on $\tilde{w}$ as in (\ref{wp-constraints}). This gives a value of $\tilde{w}_t^{(i)}$ after $t = m-1$ iterations corresponding to $\hat{p}^{(i)}$. The same procedure is then carried out for all $K$ vertices of $\mathcal{D}_{CH}^{(K)}$, and we choose the index $(I)$ that maximizes the objective $\tilde{J}(\tilde{w}_t^{(i)}, \hat{p}^{(i)}); i=1,2,..,K$. 

We now show that the $\tilde{w}_t^{(I)}$ obtained for the index $(I)$ satisfies Theorem \ref{theorem1}. For simplicity, we omit the index $(I)$ from $\tilde{w}_t^{(I)}$ and just use $\tilde{w}_t$.
We first initialize the Frank-Wolfe algorithm by choosing an appropriate $\tilde{w}_0$ as follows:
\bea
f_0 &=& \argmax_{n \in [N]} \, \langle \frac{l_n}{|| l_n||}, \sum_{n=1}^N\,{\hat{p}_n^{(I)}}\,l_n\rangle \nonumber\\
\tilde{w}_0 &=& \frac{\sigma^{(I_0)}}{|| l_{f_0}||}\mathbf{1}_{f_0} = \frac{\sigma^{(I)}}{|| \tilde{\sigma}_{f_0}||}\mathbf{1}_{f_0}
\eea
We now show that $\tilde{J}(\tilde{w}_0,\hat{p}^{(I)}) \leq \sigma^2 \eta^2$. Notice that for any $\xi \in \mathbb{R}^N$ such that $\sum_{n=1}^N \xi_n = 1$,
\bea
\frac{\tilde{J}(\tilde{w}_0,\hat{p}^{(I)})} {{\sigma^{(I)}}^2} &=& 1 - 2< \frac {l_{f_0}} {\tilde{\sigma_{f_0}}},\frac{L^{(I)}} {\sigma^{(I)}}> + \frac{||L^{(I)}||} {{\sigma^{(I)}}^2} \nonumber\\
&\leq& 1 - 2 \sum_{n=1}^N \xi_n <\frac {l_n} {\sigma_n}, \frac {L^{(I)}} {\sigma^{(I)}}> + \frac {||L^{(I)}||} {{\sigma^{(I)}}^2}, 
\eea
 since $l_{f_0}$ maximizes the inner product above. Choosing $\xi_n = \frac {\hat{p}^{(I)} \tilde{\sigma}_n} {\sigma^{(I)}}$ yields 

\bea
\frac {\tilde{J} (\tilde{w}_0, \hat{p}^{(I)})} {{\sigma^{(I)}}^2} &\leq&  1- 2 \sum_{n=1}^N <\frac {\hat{p}_n^{(I)} l_n} {\sigma^{(I)}}, \frac {L^{(I)}} {\sigma^{(I)}}> + \frac {||L^{(I)}||} {{\sigma^{(I)}}^2} \nonumber \\
&=& 1 - \frac {||L^{(I)}||} {{\sigma^{(I)}}^2} := {\eta^{(I)}}^2
\eea Thus, $\tilde{J} (\tilde{w}_0,\hat{p}^{(I)}) \leq {\sigma^{(I)}}^2 {\eta^{(I)}}^2 \leq \hat{\sigma}^2 \hat{\eta}^2$.

Next, we will look at the sequence of values, $\tilde{J}(\tilde{w}_t, \hat{p}^{(I)})$. The goal is to use Lemma \ref{sequence} on this sequence to get the desired bound.
The Frank-Wolfe algorithm uses the update $\tilde{w}_{t+1} = \tilde{w}_t + \gamma^{(I)}_t\,d_{(I)}$, where 
\bea
 d_{(I)} &=& \argmin_{s} [\nabla J(\tilde{w}_t, \hat{p}^{(I)})^T  s] - \tilde{w}_t \nonumber \\
\mathrm{or}, \; d_{(I)} &=& \argmin_s [2(\tilde{w}_t-\hat{p}^{(I)})^T \, \tilde{K} \, s]- \tilde{w}_t=  \argmin_{s}[ 2(\tilde{w}_t-\hat{p}^{(I)})^T \, \tilde{K} \, \frac{\sigma^{(I)}}{\tilde{\sigma}_n}\,\mathbf{1}_n] -\tilde{w}_t,
\eea
where we have used the fact that the minimizer of a linear function in $s$ over a convex polytope will lie at one of the vertices $\frac{\sigma^{(I)}}{\tilde{\sigma}_n}\,\mathbf{1}_n; \;n\in [N]$.
Then it is straightforward to show that:
\be d_{(I)} = \frac{\sigma^{(I)}}{ \tilde{\sigma}_{f_t}}\mathbf{1}_{f_t} - \tilde{w}_t; \;\;f_t = \argmax_{n \in [N]} \langle L - L(\tilde{w}_t), \frac{l_n}{\tilde{\sigma}_n}\rangle  \ee
This gives:
\be \label{jt}
\tilde{J}(\tilde{w}_{t+1},\hat{p}^{(I)})=\tilde{J}(\tilde{w}_t,\hat{p}^{(I)}) + 2 \gamma^{(I)}_t d_{(I)} \tilde{K}  (\tilde{w}_t - \hat{p}^{(I)}) + {\gamma_t^{(I)}}^2 (d_{(I)})^T \tilde{K} d_{(I)}
\ee
According to the Frank-Wolfe algorithm, we choose $\gamma_t^{(I)}$ which minimizes this quantity in the domain [0,1], which gives rise to:
\be\gamma_t^{(I)} = \frac{d_{(I)} \tilde{K} (\hat{p}^{(I)} - \tilde{w}_t)}{d_{(I)}\tilde{K} d_{(I)}}\ee Substituting this in (\ref{jt}) gives:
\bea
\tilde{J} (\tilde{w}_{t+1},\hat{p}^{(I)}) &=& \tilde{J} (\tilde{w}_t, \hat{p}^{(I)}) - \frac {((d_t^{(I)})^T \tilde{K} (\hat{p}^{(I)} - \tilde{w}_t))^2} {(d_t^{(I)})^T \tilde{K} d_t^{(I)}} \nonumber \\
&=& \tilde{J} (\tilde{w}_t, \hat{p}^{(I)})  \left( 1 - \inner {\frac {\frac{\sigma^{(I)}}{\tilde{\sigma}_{f_t}}l_{f_t} - L(\tilde{w}_t)} {\norm{\frac{\sigma^{(I)}}{\tilde{\sigma}_{f_t}}l_{f_t} - L(\tilde{w}_t)}}} {\frac{L^{(I)} - L({\tilde{w}_t)} } {\norm{L^{(I)} - L( {\tilde{w}_t)}}}} \right)
\eea
Now, notice that since Lemma \ref{interior} says that $L^{(I)}$ is in the relative interior of the convex polytope (in $\tilde{w}$ space) for a given $\hat{p}^{(I)}$, there exists an $r^{(I)}>0$ such that all $\tilde{w}$ that are feasible, 
\be L(\tilde{w}) + (\norm{L^{(I)} - L(\tilde{w})} + r^{(I)}) \frac {L^{(I)} - L(\tilde{w})} {\norm{L^{(I)} - L(\tilde{w})}} \ee
is also feasible. Furthermore, the Frank-Wolfe vertex $\frac{\sigma^{(I)}} {\tilde{\sigma}_{f_t}} l_{f_t}$ maximizes the inner product $\inner {L^{(I)} - L(\tilde{w}_t)} {L(\tilde{w}) - L(\tilde{w}_t)}$ over all feasible $\tilde{w}$. This implies that
\bea \label{r-bound}
\inner{\frac{\frac{\sigma^{(I)}}{\tilde{\sigma}_{f_t}}l_{f_t} - L(\tilde{w}_t)} {\norm{\frac{\sigma^{(I)}}{\tilde{\sigma}_{f_t}}l_{f_t} - L(\tilde{w}_t)}}} {\frac{L^{(i)} - L(\tilde{w}_t )} {\norm{L^{(i)} - L(\tilde{w}_t )}}} &\geq& \inner {\frac{(\norm{L^{(I)} - L(\tilde{w}_t)} + r^{(I)}) \frac {L^{(I)} - L(\tilde{w}_t)}{\norm{L^{(I)} - L(\tilde{w}_t)}}} {\norm{\frac{\sigma^{(I)}}{\tilde{\sigma}_{f_t}}l_{f_t} - L(\tilde{w}_t)}}} {\frac{L^{(I)} - L(\tilde{w}_t )} {\norm{L^{(I)} - L(\tilde{w}_t )}}} \\
&=& \frac{\sqrt{\tilde{J} (\tilde{w}_t,\hat{p}^{(I)}})+r^{(I)}}{\norm{\frac{\sigma^{(I)}}{\tilde{\sigma}_{f_t}}l_{f_t} - L (\tilde{w}_t)}} \nonumber \\
&\geq& \frac{\sqrt{\tilde{J} (\tilde{w}_t,\hat{p}^{(I)}})+r^{(I)}}{\sigma^{(I)} \bar{\eta}^{(I)}} \nonumber
\eea
Combining equations (\ref{jt}) and (\ref{r-bound}) gives:
\bea
\tilde{J} (\tilde{w}_{t+1},\hat{p}^{(I)}) &\leq& \tilde{J} (\tilde{w}_t, \hat{p}^{(I)}) \left( 1- \left( \frac{\sqrt{\tilde{J} (\tilde{w}_t,\hat{p}^{(I)}})+r^{(I)}} {\sigma^{(I)} \bar{\eta}^{(I)}} \right)^2 \right) \nonumber \\
&\leq&  {\beta^{(I)}}^2 \tilde{J} (\tilde{w}_t,\hat{p}^{(I)})  \left( 1 - \frac {\tilde{J} (\tilde{w}_t, \hat{p}^{(I)})} {(\sigma^{(I)})^2 (\bar{\eta}^{(I)})^2 (\beta^{(I)})^2}\right) \nonumber \\
&\leq& \hat{\beta}^2 \tilde{J} (\tilde{w}_t,\hat{p}^{(I)}) \left( 1 - \frac {\tilde{J} (\tilde{w}_t, \hat{p}^{(I)})} {\hat{\sigma}^2 \hat{\bar{\eta}}^2 \hat{\beta}^2} \right)
\eea
Setting $\alpha = \hat{\beta}^2$ and $x_t := \frac{\tilde{J} (\tilde{w}_t, \hat{p}^{(I)})} {\hat{\sigma}^2 \hat{\bar{\eta}}^2 \hat{\beta}^2}$.
Then by Lemma \ref{sequence},
\be \frac{\tilde{J} (\tilde{w}_t, \hat{p}^{(i)})} {\hat{\sigma}^2 \hat{\bar{\eta}}^2 \hat{\beta}^2} \leq \frac{\frac{\tilde{J} (\tilde{w}_0, \hat{p}^{(i)})} {\hat{\sigma}^2 \hat{\bar{\eta}}^2 \hat{\beta}^2} }{\hat{\beta}^{(-2t)} + t\frac{\tilde{J} (\tilde{w}_0, \hat{p}^{(i)})} {\hat{\sigma}^2 \hat{\bar{\eta}}^2 \hat{\beta}^2}}\ee
Using the fact that $\frac{a}{a+b}$ is monotonically increasing in $a$ when $a,b > 0$ and $\tilde{J} (\tilde{w}_0, \hat{p}^{(i)}) \leq \hat{\sigma}^2 \hat{\eta}^2$, one gets:
\bea 
\tilde{J} (\tilde{w}_t, \hat{p}^{(I)})  &\leq & \frac{\hat{\sigma}^2 \hat{\bar{\eta}}^2 \hat{\beta}^2} {\hat{\eta}^2\,t + \hat{\beta}^{-2(t-1)}\hat{\bar{\eta}}^2} \nonumber \\
\mathrm{or}, \;||L-L(\tilde{w}_t|| &\leq& \frac{\hat{\sigma}^2 \hat{\bar{\eta}}^2 \hat{\beta}^2}{\sqrt{\hat{\eta}^2\,(m-1) + \hat{\beta}^{-2(m-2)}\hat{\bar{\eta}}^2}}
\eea after $t=m-1$ iterations. 
\end{proof}

\subsection{Proof of Theorem \ref{theorem2} for $\mathcal{D} = \mathcal{D}^{(K)}_{CH} (\{\hat{p}^{(i)}\})_{ i \in [K]}$ }\label{th2proof}

\begin{proof}
The proof of this closely follows Theorem 5.2 in \cite{Campbell2017}. Given a posterior distribution $\nu(\theta)$, we have: 
\be \langle l_m, l_n \rangle := \mathbb{E}_{\theta \sim \nu}\,[l_n(\theta)l_m(\theta) ].\ee Given the true vector as $l_n$ and it's random projection as 
$u_n$, one can write: 
\bea
\mathbb{P}[\max_{m,n}|\langle l_m, l_n \rangle - u_m^T\,u_n| \geq \epsilon] &\leq& [\sum_{m,n}\,\mathbb{P}[|\langle l_m, l_n \rangle - u_m^T\,u_n| \geq \epsilon] \nonumber\\
&\leq& N^2\,\max_{m,n}\mathbb{P}[|\langle l_m, l_n \rangle - u_m^T\,u_n| \geq \epsilon] = N^2\,\max_{m,n}\mathbb{P}[|\langle l_m, l_n \rangle - \frac{1}{J}\sum_{j=1}^J\,u_{mj}^T\,u_{nj} | \geq \epsilon]\nonumber\\
&\leq& 2N^2\,e^{-\frac{J\epsilon^2}{2\xi^2}}, 
\eea where in the final inequality we have used Hoeffding's Lemma for the gaussian random variable $x := l_n(\theta)\,l_m(\theta)$. Thus, with probability $\geq 1-\delta$, we have:
\be \label{th2a}
\max_{m,n}|\langle l_m, l_n \rangle - u_m^T\,u_n| \leq \sqrt{\frac{2\xi^2}{J}\log{\frac{2N^2}{\delta}}}
\ee
The above can then be used to upper bound the difference between the true expectation $||L-L(\tilde{w})||_{\nu,2}$ and the approximate expectation constructed from random projections $||u -u(\tilde{w})||_{\nu,2}$, where $\{\tilde{w}, p\} = \{\tilde{w}_{m-1}^{(I)}, \hat{p}^{(I)}\}$ is the output of Algorithm 2:
\bea
|| L-L(\tilde{w})||^2_{\nu,2} - || u -u(\tilde{w})||^2_{\nu,2} &=& \sum_{m,n}\,[(\tilde{w}_m -p) (\langle l_m, l_n \rangle-u_m^T u_n)(\tilde{w}_n-p)] \nonumber\\
&\leq& \sum_{m,n}\,|\tilde{w}_m-p| |\langle l_m, l_n \rangle-u_m^T u_n| |\tilde{w}_n-p| \nonumber\\
&\leq& ||\tilde{w}-p||_1^2 \,\max_{m,n} |\langle l_m, l_n \rangle - u_m^T\,u_n|
\eea Therefore, using (\ref{th2a}), with probability greater than $1-\delta$, one has: 
\be \label{th2b}
 || L-L(\tilde{w})||^2_{\nu,2}  \leq  || u -u(\tilde{w})||^2_{\nu,2} + ||\tilde{w} - p||_1^2\,\sqrt{\frac{2\xi^2}{J}\log{\frac{2N^2}{\delta}}}
\ee
\end{proof}

\bibliographystyle{ieeetr}
\bibliography{generalization_bound}

\end{document}